%% file: main.tex
\documentclass[letterpaper]{article} %

\input{packages}

\input{macros}

\input{titlepage}

\begin{document}

\maketitle

\input{Sections/abstract}

\input{Sections/introduction}
\input{Sections/prelims}
\input{Sections/algorithm}
\input{Sections/empirical}
\input{Sections/conclusion}

\bibliography{refs}

\clearpage
\appendix

\input{Sections/appendix}

\end{document}

%% file: packages.tex
\usepackage{aaai2027}
\nocopyright

\usepackage[hyphens]{url}  %
\usepackage{graphicx}      %
\usepackage{natbib}        %
\usepackage{caption}       %
\usepackage{algorithm}
\usepackage{algorithmic}

\usepackage{newfloat}
\usepackage{listings}
\DeclareCaptionStyle{ruled}{labelfont=normalfont,labelsep=colon,strut=off} %
\floatstyle{ruled}
\newfloat{listing}{tb}{lst}{}
\floatname{listing}{Listing}

\usepackage{booktabs}

\usepackage[T1]{fontenc}
\usepackage[utf8]{inputenc}
\usepackage{textcomp}

\usepackage{amsmath,amssymb,amsthm}
\usepackage{mathtools}

\usepackage[dvipsnames]{xcolor}
\usepackage{tikz}
\usetikzlibrary{automata, arrows.meta, positioning}

\usepackage{multirow}
\usepackage{enumitem}
\usepackage{comment}

%% file: macros.tex
\newcommand{\E}{\mathbb{E}}                 %
\newcommand{\N}{\mathbb{N}}                 %

\DeclareMathOperator*{\argmax}{arg\,max}

\renewcommand{\S}{\mathcal{S}}   %
\newcommand{\A}{\mathcal{A}}     %
\renewcommand{\P}{P}             %
\newcommand{\snot}{i_0}          %

\renewcommand{\Pr}[1]{\P(#1 \mid s, \, a)}

\newcommand{\SApair}[1]{s_{#1}, \, a_{#1}}

\newcommand{\Snt}{\S_{\text{nt}}}
\newcommand{\Sterm}{\S_{\text{term}}}
\newcommand{\ST}{\S_\mathcal{G}}
\newcommand{\SN}{\S_{{\neg \mathcal{G}}}}

\newcommand{\SAnt}{\Snt \times \A}              %
\newcommand{\maxSAnt}{\max_{s, a \in \SAnt}}    %

\newcommand{\Vstar}{V^*}
\newcommand{\Qstar}{Q^*}                        %
\newcommand{\OptimalP}{\pi^*}
\renewcommand{\H}{\mathcal{H}}   %

\newcommand{\wnorm}[1]{\left\| #1 \right\|_{w}}      %
\newcommand{\err}{n}                                 %

\newcommand{\QUASAR}{\textsc{Quasar}}                %
\newcommand{\QUASARN}{\textsc{Quasar-Naive}}         %
\newcommand{\QUASARBR}{\textsc{Quasar-BR}}           %
\newcommand{\SPAC}{\textsc{Staged-PAC}}              %
\newcommand{\verr}{v\text{-}\mathrm{err}_{\snot}}    %
\newcommand{\qerr}{q\text{-}\mathrm{err}_{\max}}     %

\newcommand{\Theoremspace}{\vspace{0.15cm}}   %

\newtheorem{theorem}{Theorem}
\newtheorem{corollary}[theorem]{Corollary}
\newtheorem{lemma}{Lemma}

\newtheorem{definition}{Definition}
\newtheorem{assumption}{Assumption}
\newtheorem{property}{Property}

%% file: titlepage.tex
\title{Q-Learning for Reachability in MEC-Free MDPs}
\author{
    Lu-Chin Chang,
    Suguman Bansal
}
\affiliations{
    Georgia Institute of Technology\\
    luchin\_chang@gatech.edu, suguman@gatech.edu
}

%% file: Sections/abstract.tex
\begin{abstract}
{\em Reinforcement learning} (RL) for {\em reachability specifications} is
fundamental to sequential decision-making. Prior work establishes
{\em asymptotic convergence} to optimal policies, but only through
{\em model-based} methods that must explicitly estimate the transition
probabilities of the underlying Markov Decision Process (MDP). We present
\QUASAR{}, the first {\em model-free} algorithm with asymptotic guarantees for
reachability on the fragment of MDPs free of non-terminal {\em maximal end
components (MECs)}, a building block to which every MDP reduces by the standard
MEC quotient. Our algorithm follows the classical {\em Q-learning}
approach, using temporal-difference updates to converge to an optimal
policy without ever learning the transition probabilities. The resulting learner
reduces the memory footprint from the $O(|S|^2 |A|)$ that
model-based methods require to $O(|S| |A|)$. On the
standardized Quantitative Verification Benchmark Set, our algorithm converges to the optimal
policy with orders of magnitude fewer samples than the previous model-based
state-of-the-art. Together these
results are a concrete step toward the practical deployment of reachability
learning and, with it, of specification-guided RL.
\end{abstract}

%% file: Sections/introduction.tex
\section{Introduction}
\label{sec:intro}

Reinforcement learning (RL)~\cite{suttonbarto2018rl} is the standard framework for sequential decision-making in unknown Markov decision processes (MDPs), and cast as reward maximization it admits strong theoretical guarantees. The strongest is \emph{asymptotic convergence}~\cite{watkins1992qlearning}, under which the learned policies are optimal in the limit almost surely, and it is on such guarantees that the practical success of RL rests, from robotic locomotion~\cite{collins2005bipedal} and dexterous manipulation~\cite{andrychowicz2020dexterous} to visuomotor control~\cite{levine2016end}.

In this work we seek to carry this guarantee into the setting of \emph{formal specifications}, where an agent must satisfy a temporal objective rather than maximize a hand-tuned reward~\cite{alur2026specification}. Specifications in linear temporal logic (LTL) and related $\omega$-regular formalisms express rich behaviours, such as safety and liveness, that are awkward to encode as rewards~\cite{hahn2019omega}. It is worth noting that an asymptotic guarantee is close to the best one can hope for here, since the finite-sample (PAC) guarantees available for reward-based RL largely fail for specifications. In particular, general LTL objectives are not PAC-learnable~\cite{yang2021intractability,alur2021framework}, and the PAC guarantees for restricted classes rest on an unobservable quantity of the MDP, such as the smallest transition probability~\cite{ashok2019pac,perez2023pac,svoboda2024reachability}.

Among all specifications, \emph{reachability} is the most fundamental, the primitive from which the richer LTL and $\omega$-regular objectives are built, so that an algorithm learning it with a given guarantee immediately yields one for that entire class (Section~\ref{sec:related}). This makes reachability the linchpin of specification-guided RL, and it is the problem we take up.

For reachability, asymptotic guarantees have been established only recently, and only by \emph{model-based} means~\cite{palasamudram2026asymptotic,le2024reinforcement}. Building a model is a genuine hindrance at scale. A model-based learner stores a transition estimate for every state-action-successor triple, at $O(|\S|^2|\A|)$ memory, and must refine it to uniform accuracy before planning, so its sample and time costs track the model size rather than the difficulty of the objective. On the large product MDPs that specifications induce, both are prohibitive. One naturally wonders whether reachability admits a \emph{model-free} learner that still converges asymptotically.

We answer this affirmatively. We present \QUASAR{}, the first model-free algorithm with asymptotic guarantees for reachability. We operate on the \emph{no-EC} fragment of MDPs, those with no maximal end component (MEC) among the non-terminal states, a standard structural condition that prevents the agent from postponing termination forever. Every MDP reduces to the no-EC fragment by the standard MEC quotient~\cite{deAlfaro1997}, so the fragment is a building block for the general MDP setting rather than a restriction of it. \QUASAR{} is deliberately in the style of Q-learning~\cite{watkins1992qlearning}. It keeps a single $O(|\S|\,|\A|)$ table of value estimates, refines them from sampled transitions, and never forms a model. The difficulty lies not in the algorithm but in its analysis. Classical Q-learning converges because a discount factor $\gamma \in (0,1)$ makes the Bellman operator a contraction, and standard discounted-sum RL rewards a trajectory that first reaches the target at step $k$ with $\gamma^k$. Reachability is precisely the undiscounted case $\gamma = 1$ of this scheme, where the contraction that drives Q-learning to converge disappears, so the classical proof does not carry over. The natural remedy, recasting reachability as some other discounted-sum reward, is ruled out by Alur et al.~\cite{alur2021framework}. No optimality-preserving reduction from reachability to discounted-sum rewards exists, and the gap must therefore be closed directly.

We close it by replacing the missing discount factor with a quantity intrinsic to the MDP, the expected time to termination. Measuring convergence in a norm weighted by this hitting time, we show that the undiscounted Bellman operator contracts after all, with the expected hitting time playing exactly the role that the discount factor plays in the classical analysis. This weight is a property of the MDP, not a parameter the learner must know or tune, so our guarantee rests on no numerical quantity of the MDP, only on the structural no-EC assumption. A stochastic-approximation argument then lifts the contraction from the exact operator to the sampled updates, yielding convergence with probability one. The hitting-time weight may recall the stochastic shortest path problem, but the two are distinct and neither reduces to the other, so its contraction machinery does not apply off the shelf. We return to this in Section~\ref{sec:related}.

These guarantees translate into practice. We evaluate two versions of \QUASAR{}, the online \QUASARN{} and a backward-replay variant \QUASARBR{} that preserves the guarantee (Theorem~\ref{thm:br}). On the standardized Quantitative Verification Benchmark Set~\cite{qvbs2019}, \QUASARBR{} converges to the optimal value on every benchmark, using orders of magnitude fewer samples than the model-based state of the art.

The key contribution of this work is to establish the following two results, formalized as Theorem~\ref{thm:main} and Theorem~\ref{thm:space}.
\begin{enumerate}
    \item On no-EC MDPs, the iterates of \QUASAR{} converge to the optimal reachability values almost surely (Theorem~\ref{thm:main}). It is worth emphasizing that the sole hypothesis is structural (Assumption~\ref{asm:finite_hitting}), and the guarantee is conditional on no numerical parameter of the MDP.
    \item \QUASAR{} runs in $O(|\S|\,|\A|)$ space (Theorem~\ref{thm:space}), a quadratic improvement over the $O(|\S|^2 |\A|)$ that model-based methods spend on stored transition estimates.
\end{enumerate}
To the best of our knowledge, \QUASAR{} is the first algorithm to attain an asymptotic guarantee for reachability without either estimating a model or reducing the objective to rewards.

%% file: Sections/prelims.tex
\section{Preliminaries and Problem Setup}
\label{sec:prelims}

\subsection{Preliminaries}

\paragraph{Markov Decision Processes (MDPs)}
A finite \emph{Markov Decision Process} (MDP) is a tuple $(\S, \, \A, \, \{\A(s)\}_{s \in \S}, \, \snot, \, \P)$ consisting of a finite set of \emph{states} $\S$, a finite set of \emph{actions} $\A$ with a non-empty set of \emph{enabled actions} $\A(s) \subseteq \A$ at each state $s$, an \emph{initial state} $\snot \in \S$, and a \emph{transition kernel} $\P$ that assigns to every $s$ and enabled $a \in \A(s)$ a distribution $\P(\cdot \mid \SApair{}) \in \Delta(\S)$ over successors. Here $\Pr{s'}$ is the probability of moving from $s$ to $s'$ under $a$. We partition $\S$ into \emph{non-terminal states} $\Snt$ and \emph{terminal states} $\Sterm$, the latter absorbing ($\Pr{s} = 1$ for $s \in \Sterm$, $a \in \A(s)$) and further split into \emph{target states} $\ST$ and \emph{non-target states} $\SN = \Sterm \setminus \ST$. We assume without loss of generality that every state is reachable from the initial state $\snot$, since any state unreachable from $\snot$ can be discarded without affecting the reachability values.

\paragraph{Policies, Trajectories, and Reachability}

A \emph{deterministic policy} $\pi : \S \to \A$ with $\pi(s) \in \A(s)$ fixes an action at each state, inducing a stochastic process $\{(s_t, a_t)\}_{t \geq 0}$ with $s_0 = \snot$, $a_t = \pi(s_t)$, and $s_{t+1} \sim \P(\cdot \mid \SApair{t})$. We write $\tau = (s_0, a_0, s_1, \dots)$ for a trajectory and $\P^\pi$ for its distribution, under which $\P^\pi\bigl((s_0, \dots, s_t)\bigr) = \prod_{k=0}^{t-1} \P(s_{k+1} \mid \SApair{k})$.

For a set of states $S \subseteq \S$, we write $\tau \models S$ when $s_n \in S$ for some $n \in \N$, i.e., the trajectory visits $S$. The \emph{state} and \emph{state-action reachability values} of a policy $\pi$ are the probabilities of reaching the target set $\ST$ from $s$ and from $(\SApair{})$, respectively.
\begin{align}
    V^\pi(s) &\coloneqq \P^\pi(\tau \models \ST \mid s_0 = s), \label{def:vval}\\
    Q^\pi(\SApair{}) &\coloneqq \P^\pi(\tau \models \ST \mid s_0 = s,\, a_0 = a). \label{def:qval}
\end{align}
Since terminal states are absorbing, $V^\pi$ and $Q^\pi$ equal $1$ on $\ST$ and $0$ on $\SN$ for every $\pi$. 

A policy $\OptimalP$ is \emph{optimal} if $V^{\OptimalP} \geq V^\pi$ for all $\pi$. For finite MDPs a deterministic optimal policy always exists for reachability objectives, and the optimal value $\Vstar$ is unique even when several optimal policies attain it~\cite{deAlfaro1997}.

\paragraph{Reinforcement Learning}
In \emph{reinforcement learning} (RL)~\cite{suttonbarto2018rl} the dynamics of the MDP are unknown. The learner knows $\S$, $\A$, $\{\A(s)\}_{s \in \S}$, and $\snot$, but not $\P$. In its place it queries a \emph{simulator} that, given $(\SApair{})$, returns a successor $s' \sim \P(\cdot \mid \SApair{})$ from the true dynamics. A \emph{learning algorithm} interacts with the simulator (at each step sampling an action from the current state or resetting to $\snot$) and maintains an estimate of the optimal values, thereby inducing a random sequence of estimates $\{Q_t\}_{t \ge 0}$ and policies $\{\pi_t\}_{t \ge 0}$. The strongest classical guarantee is of \emph{asymptotic convergence}~\cite{watkins1992qlearning} which states that the induced policies are optimal in the limit, almost surely. Formally,
\Theoremspace
\begin{definition}[Asymptotic Guarantee]\label{def:asymptotic}
A learning algorithm \emph{converges asymptotically} if the policies $\{\pi_t\}_{t \ge 0}$ it induces satisfy $V^{\pi_t}(\snot) \to \Vstar(\snot)$ as $t \to \infty$ with probability $1$.
\end{definition}

\subsection{Problem Setup}\label{sec:setup}
We aim to learn reachability objectives with \emph{asymptotic guarantees} (Definition~\ref{def:asymptotic}) in a \emph{model-free} way, i.e., from simulator access alone without building an explicit model of $\P$. Prior algorithms with asymptotic guarantees for reachability are \emph{model-based}~\cite{palasamudram2026asymptotic,le2024reinforcement}, explicitly estimating $\P$ and then computing the optimal policy by classical optimization~\cite{deAlfaro1997}.

In this work we restrict our attention to the \emph{no-EC} fragment of MDPs, which we define through end components.
\Theoremspace
\begin{definition}[End Component and MEC]\label{def:ec}
An \emph{end component} (EC) is a non-empty set $C \subseteq \S \times \A$ that is (i)~\emph{closed}, i.e., if $(\SApair{}) \in C$ and $\P(s' \mid \SApair{}) > 0$, then $(s', a') \in C$ for some $a' \in \A(s')$, and (ii)~\emph{strongly connected}, i.e., the subgraph it induces links any two of its states via transitions within $C$. A \emph{maximal end component} (MEC) is an EC contained in no larger EC.
\end{definition}

Intuitively, an MEC is a region in which the agent can remain forever. We exclude such regions from among the non-terminal states.
Every MDP reduces to a no-EC instance by collapsing each maximal end component to a single state, the standard MEC quotient~\cite{deAlfaro1997}, so the no-EC fragment is a building block for the general MDP setting rather than a restriction of it. 

\Theoremspace
\begin{assumption}[No-EC]\label{asm:finite_hitting}
No end component contains a pair $(\SApair{})$ with $s \in \Snt$. Equivalently, no MEC contains a non-terminal state.
\end{assumption}

This frames the central question of the paper. \emph{Can reachability objectives on no-EC MDPs be learnt with asymptotic guarantees in a model-free manner?} We answer this affirmatively and constructively. Section~\ref{sec:algo} presents the algorithm, and Section~\ref{sec:convergence} establishes its guarantee (Theorem~\ref{thm:main}).

\subsection{Related Work}
\label{sec:related}

Every $\omega$-regular specification~\cite{thomas1990automata}, and hence every LTL specification~\cite{pnueli1977}, reduces to optimal reachability, a reduction standard for both known~\cite{baier2008principles} and unknown~\cite{sickert2016ldba,hahn2020gfm,hahn2020faithful} MDPs and simulable for reinforcement learning~\cite{hahn2020gfm}. It passes through a canonical automaton form~\cite{safra1988} and a probability-independent graph analysis~\cite{courcoubetis1995,chatterjee2011mec,chatterjee2014mec}, and finite-horizon ($\text{LTL}_f$) goals reduce likewise~\cite{degiacomo2013ltlf}. Any advance in learning reachability therefore transfers to the full class, placing reachability at the centre of specification-guided learning.

The only prior algorithms with asymptotic guarantees, Palasamudram et al.~\cite{palasamudram2026asymptotic} for reachability and Le et al.~\cite{le2024reinforcement} for full LTL, both estimate a model of the dynamics and reduce the objective to average reward. \QUASAR{} is the first model-free alternative, matching their guarantee with a single $O(|\S|\,|\A|)$ table in place of the $O(|\S|^2 |\A|)$ a model demands.

A separate line of model-free algorithms targets $\omega$-regular objectives directly~\cite{hahn2019omega,hahn2020gfm,hahn2020faithful}, translating the specification into a reward on an enlarged state space. That translation is faithful only past a threshold that itself depends on the transition probabilities the learner is trying to avoid estimating, so no fixed choice can be certified in advance. \QUASAR{} asks for no such parameter, resting only on the no-EC assumption (Assumption~\ref{asm:finite_hitting}).

Our hitting-time weight may recall the stochastic shortest path (SSP) problem, whose analysis also turns on a contraction and an asynchronous stochastic-approximation lemma~\cite{bertsekas1991analysis,tsitsiklis1994asynchronous}, yet the resemblance is confined to the machinery. SSP seeks a proper policy minimizing an additive path cost to a target reached with certainty, whereas reachability maximizes a target-reaching probability that need not equal one and ignores path length. Since reachability grants no proper policy in general, the SSP contraction does not transfer, and neither problem reduces to the other (Sup.\ mat.~\ref{app:ssp}).

%% file: Sections/algorithm.tex
\section{Algorithm}
\label{sec:algo}
We present our learning algorithm and explain why it computes reachability values. The proof is deferred to Section~\ref{sec:convergence}. We recall the Bellman equations, state the algorithm, and walk through its updates.
Our algorithm adapts \emph{Q-learning}~\cite{watkins1992qlearning} to reachability. Classical Q-learning cannot be reused off the shelf, as reachability admits no optimality-preserving reduction to discounted-sum rewards~\cite{alur2021framework}. We therefore learn the reachability values $\Qstar$ \emph{directly}, with no discount factor, recovering the state values as $\Vstar(s) = \max_{a \in \A(s)} \Qstar(\SApair{})$. A salient strength of the algorithm is its simplicity. It is Q-learning with the discount removed and the terminal values clamped, yet it converges on the entire no-EC fragment.
\paragraph{Bellman equations.}
The reachability values obey Bellman recursions, in which a state's value is the average of its successors' values, anchored at $1$ on the targets and $0$ on the other terminals. Formally, for a fixed policy $\pi$, non-terminal $s \in \Snt$, and $a \in \A(s)$,
\begin{align*}
    V^{\pi}(s) &= \sum_{s' \in \S} \P(s' \mid s, \, \pi(s))\, V^\pi(s'), \\
    Q^{\pi}(\SApair{}) &= \sum_{s' \in \S} \P(s' \mid s, \, a)\, V^\pi(s').
\end{align*}
Maximizing over policies and invoking the Markov property yields the \emph{Bellman optimality equations}
\begin{align}
    V^*(s) &= \max_{a \in \A(s)} \sum_{s' \in \S} \P(s' \mid \SApair{})\, V^*(s'), \label{def:vboe}\\
    Q^*(\SApair{}) &= \sum_{s' \in \S} \P(s' \mid \SApair{})\, V^*(s'), \label{def:qboe}
\end{align}
and our algorithm learns $\Qstar$ as the fixed point of~\eqref{def:qboe} from sampled transitions.
\begin{algorithm}[t]
\caption{\QUASARN{}}
\label{alg:qlearning}
\begin{algorithmic}[1]
  \STATE \textbf{Algorithm parameters:} step-size schedule $\{\alpha_n(\SApair{})\}_{n \ge 1} \subseteq (0,1]$ indexed by the visit count, satisfying Assumption~\ref{asm:lr}; exploration rate $\epsilon > 0$; convergence threshold $\eta > 0$
  \STATE Initialize $Q(s,a)=0$ and $N(s,a)=0$ for $s \in \Snt$
  \STATE Set $Q(s,a)=1$ for $s\in\ST$ and $Q(s,a)=0$ for $s\in\SN$ (fixed, never updated)
  \STATE Initialize $\Delta = \infty$
  \WHILE{$\Delta \ge \eta$}
    \STATE $\Delta \gets 0$
    \STATE Initialize starting state $s = \snot$
    \REPEAT
      \STATE Choose $a$ from $s$ using $\epsilon$-greedy policy based on $Q$
      \STATE Get next state $s' \sim P(\cdot \mid s, a)$ via simulator
      \STATE $q_{\text{old}} \gets Q(s,a)$; \; $N(s,a) \gets N(s,a) + 1$
      \STATE $Q(s,a) \gets Q(s,a)$ \\
             \hspace{1.5em}$+\, \alpha_{N(s,a)}(\SApair{})\!\left[\max_{a' \in \A(s')} Q(s'\!,a') - Q(s,a)\right]$
      \STATE $\Delta \gets \max(\Delta, |Q(s,a) - q_{\text{old}}|)$
      \STATE $s \gets s'$
    \UNTIL{$s \in \Sterm$}
  \ENDWHILE
\end{algorithmic}
\end{algorithm}
\paragraph{The learning algorithm.}
The algorithm maintains an estimate $Q$ of $\Qstar$, moving $Q(s,a)$ at each visited pair toward the sampled successor's value as~\eqref{def:qboe} dictates. Since the successor is drawn from the true dynamics, this target is an unbiased estimate of~\eqref{def:qboe}, and the update never consults $\P$, so the learner is model-free. The terminal values ($1$ at targets, $0$ elsewhere) propagate backward, driving $Q$ to the true reachability probabilities.
Concretely (Lines~1--3), $Q$ starts at $0$ on non-terminal pairs and at the boundary values on terminals. Each episode starts at $\snot$ and, at state $s$, selects an action $\epsilon$-greedily (Line~9), draws $s' \sim P(\cdot \mid \SApair{})$ (Line~10), and moves $Q(s,a)$ toward $\max_{a' \in \A(s')} Q(s',a')$ (Line~12); it ends on reaching a terminal state (Line~15), almost surely under Assumption~\ref{asm:finite_hitting}, and the outer loop repeats until the largest sweep update falls below $\eta$. No discount factor appears, and the terminal values alone carry the reward signal. By Theorem~\ref{thm:main}, $Q \to \Qstar$ almost surely, and Corollary~\ref{cor:asymptotic} converts this into the asymptotic guarantee of Definition~\ref{def:asymptotic}.
\paragraph{Space complexity.}
Learning $\Qstar$ directly, rather than through a model of $\P$, keeps the memory footprint small.
\Theoremspace
\begin{theorem}[Space complexity]\label{thm:space}
    Algorithm~\ref{alg:qlearning} runs in $O(|\S|\,|\A|)$ space.
\end{theorem}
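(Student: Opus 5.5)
The plan is a direct accounting of every piece of state that Algorithm~\ref{alg:qlearning} keeps alive across iterations. We work in the standard word-RAM model, where each number (a value estimate, a counter, a step size) occupies $O(1)$ words. We then show that no stored object is indexed by a successor state, which is the only source of the extra $|\S|$ factor in model-based methods.

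We would proceed in four steps.
\begin{enumerate}
\item \textbf{The persistent tables.} Lines~2--3 allocate the table $Q$ over all pairs $(\SApair{})$ with $a \in \A(s)$, and the table $N$ over the non-terminal pairs. Since $\A(s) \subseteq \A$, each table has at most $|\S|\,|\A|$ entries, so together they cost $O(|\S|\,|\A|)$. The terminal entries of $Q$ are constants and could even be stored implicitly, but that refinement is not needed for the bound.
\item \textbf{The step sizes.} The schedule $\alpha_n(\SApair{})$ is indexed by the visit count. We take it to be given as a function of $(n, s, a)$ that is evaluated on demand (for example $\alpha_n = 1/n$). It then needs no storage beyond $O(1)$ per evaluation, together with the $N$ table already counted.
\item \textbf{The per-step working memory.} The scalars $\Delta$, $q_{\text{old}}$, $s$, $s'$, $a$, $\epsilon$ and $\eta$ take $O(1)$ space. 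The $\epsilon$-greedy choice (Line~9) and the target $\max_{a' \in \A(s')} Q(s',a')$ (Line~12) are each computed by a single scan over $\A(s)$ or $\A(s')$ keeping a running maximum. This costs $O(1)$ extra space, or $O(|\A|)$ if the enabled-action list must be materialized.
\item \textbf{The structural observation.} The simulator call (Line~10) returns a single sample $s'$. Nothing is accumulated per triple $(s,a,s')$, and neither trajectories nor visit histories are stored, since each update overwrites one entry in place. Summing the four contributions gives $O(|\S|\,|\A|) + O(|\A|) + O(1) = O(|\S|\,|\A|)$.
\end{enumerate}

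Because the MDP description itself (the sets $\S$, $\A$ and $\{\A(s)\}_{s \in \S}$) is given to the learner, we would note that its encoding takes at most $O(|\S|\,|\A|)$ as well. The only point needing care is step~2, so we would state explicitly the convention that the schedule is oracle-computable in $O(1)$ space; a tabulated schedule would break the bound. With that convention fixed, the argument is routine.
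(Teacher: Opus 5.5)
Your proposal is correct and follows the paper's own argument. The paper likewise counts one $Q$ entry and one visit counter $N(s,a)$ per state-action pair, plus a constant number of auxiliary scalars ($\Delta$, $q_{\text{old}}$, the current state and action), and notes that no $|\S|^2$ term arises because no transition estimates are stored; your extra conventions (an on-demand step-size schedule, a running-maximum scan over the enabled actions, and word-RAM accounting) only make explicit what the paper leaves implicit.
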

The algorithm keeps one scalar per state-action pair, one visit counter $N(\SApair{})$ per pair (which is what makes the per-pair step sizes of Assumption~\ref{asm:lr} realizable), and a constant number of auxiliary scalars ($\Delta$, $q_{\text{old}}$, and the current state and action). Both tables are indexed by $\SAnt$, so each contributes $|\S|\,|\A|$ entries. No $|\S|^2$ term arises, since it stores no transition estimates. This is a quadratic saving over the model-based route, which stores $O(|\S|^2|\A|)$ transition entries~\cite{palasamudram2026asymptotic}, and by Theorem~\ref{thm:main} we obtain it without weakening the guarantee.
\paragraph{Backward replay.}
Run online, Algorithm~\ref{alg:qlearning} is the version we call \QUASARN{} and analyze below. A second version, \QUASARBR{}, buffers a whole episode and then applies the very same updates in reverse order, from the terminal state back to the start~\cite{lin1992}. The reversal changes neither which updates are made nor their sizes, only the order in which they land, so one expects it to reach the same values. The expectation is not automatic, since a reversed update reads entries that were rewritten later in the same episode, and we therefore prove separately that \QUASARBR{} attains the guarantee of Theorem~\ref{thm:main} (Theorem~\ref{thm:br}, Sup.\ mat.~\ref{app:br}).

Reversal earns this extra proof by making each episode count for far more. With the per-pair step size $\alpha = 1/N$, a pair's first update takes $\alpha = 1$ and simply overwrites its estimate with the successor's value, so a single backward sweep carries the terminal reward from a target all the way to $\snot$ in one pass, rather than advancing it one state per episode.
On the instances whose optimum is $1$ this makes \QUASARBR{} numerically exact within a single episode regardless of goal depth (Section~\ref{sec:experiments}), which we state and prove as Theorem~\ref{thm:onepass} in Sup.\ mat.~\ref{app:br}.

\section{Proof of Convergence}
\label{sec:convergence}
We establish that the iterates of \QUASARN{} (Algorithm~\ref{alg:qlearning}) converge to $\Qstar$ almost surely (Theorem~\ref{thm:main}), building the argument in stages.

A learner that knew $\P$ could iterate the exact right-hand side of~\eqref{def:qboe}. We first cast this as fixed-point iteration of an operator $\H$ whose unique fixed point is $\Qstar$, and show $\H$ is a \emph{contraction} toward $\Qstar$ in a weighted norm (Property~\ref{prop:contraction}), so the exact iteration converges geometrically. Our learner does not know $\P$: it replaces the exact right-hand side by a single sampled successor (Line~12), making each update one exact step of $\H$ plus a mean-zero perturbation. A classical stochastic-approximation result (Lemma~\ref{lem:tsitsiklis}) shows the perturbation washes out, provided every pair is visited infinitely often (Property~\ref{prop:exploration}) and the step sizes obey the Robbins--Monro rate (Assumption~\ref{asm:lr}). This decomposition of a deterministic contraction perturbed by mean-zero noise is the classical route for discounted Q-learning, but it is closed to reachability. A known factor $\gamma < 1$ makes the discounted operator contract in the supremum norm, whereas reachability is the undiscounted case $\gamma = 1$ where it does not contract at all, and the stochastic shortest path analysis does not transfer either, as its contraction presumes a \emph{proper} policy reaching the target with probability one that reachability does not grant. Our contribution is to recover the contraction from within the MDP, with the expected hitting time supplying the role a discount factor plays classically. Full proofs are in the sup.\ mat.
\subsection{The Bellman Optimality Operator}
One exact step replaces each entry of $Q$ by its one-step lookahead under the true dynamics, averaging successor values as~\eqref{def:qboe} prescribes; the sampled successor on Line~12 is a one-draw estimate of this lookahead. Formally, it applies the \emph{Bellman optimality operator} $\H : \mathbb{R}^{\Snt \times \A} \to \mathbb{R}^{\Snt \times \A}$, defined for $(\SApair{}) \in \Snt \times \A$ by
\begin{equation}\label{def:bellman_operator}
\begin{split}
    (\H Q)(s, a) &\coloneqq \sum_{s' \in \ST} \P(s' \mid \SApair{}) \\
    &\quad + \sum_{s' \in \Snt} \P(s' \mid \SApair{}) \max_{a' \in \A(s')} Q(s', \, a').
\end{split}
\end{equation}
Comparing with~\eqref{def:qboe}, $\Qstar$ is exactly the fixed point of $\H$. Convergence thus amounts to driving $Q_t$ to this fixed point.
\subsection{Contraction under a Hitting-Time Weight}
For fixed-point iteration to converge, $\H$ must contract toward $\Qstar$. It does not contract in the supremum norm, but a suitably \emph{weighted} norm restores contraction, provided the agent cannot postpone termination indefinitely.
\Theoremspace
\begin{definition}[Weighted maximum norm]
    For a strictly positive weight $w : \S \times \A \to (0, \infty)$, the \emph{weighted maximum norm} of $Q \in \mathbb{R}^{\S \times \A}$ is $\wnorm{Q} \coloneqq \maxSAnt |Q(s,a)| / w(s,a)$.
\end{definition}
The weight is the worst-case expected time to termination: for $s \in \Snt$, $T_s \coloneqq \sup_{\pi} \E^\pi[T \mid s_0 = s]$, where $T = \inf\{t \ge 0 : s_t \in \Sterm\}$ is the hitting time of $\Sterm$. A state from which termination is expected only after many steps is pinned down more slowly, so weighting each state's error by $T_s$ places all states on a common timescale. The no-EC assumption is what keeps this weight finite.
\Theoremspace
\begin{property}[Almost-sure termination]\label{prop:termination}
    Under Assumption~\ref{asm:finite_hitting}, for every policy $\pi$ and initial state $s$ the hitting time satisfies $T < \infty$ almost surely. Moreover, $1 \le T_s \le |\Snt|/p < \infty$ for some constant $p > 0$.
\end{property}
With no end component among the non-terminal states, the agent cannot cycle within $\Snt$ forever, so from any non-terminal state a terminal state is reached within $|\Snt|$ steps with probability at least some $p > 0$. A geometric-tail bound then yields both claims (the sup.\ mat.).
Finiteness of $T_s$ is, in turn, exactly what makes $\H$ a contraction.
\Theoremspace
\begin{property}[Contraction]\label{prop:contraction}
    Under Assumption~\ref{asm:finite_hitting}, with weight $w(\SApair{}) \coloneqq T_s$, for all $Q \in [0,1]^{\Snt \times \A}$,
    \[
        \wnorm{\H Q - Q^*} \;\le\; \beta\, \wnorm{Q - Q^*},
    \]
    where $\beta = 1 - 1/\max_{s \in \Snt} T_s \in [0,1)$.
\end{property}
The crux is the one-step inequality $\sum_{s' \in \Snt} \P(s' \mid \SApair{})\, T_{s'} \le T_s - 1$ (leaving $s$ consumes one step of the budget), which bounds the per-pair contraction factor by $(T_s - 1)/T_s$. Maximizing over states gives $\beta = 1 - 1/\max_s T_s < 1$ (the sup.\ mat.). The contraction thus rests entirely on $\max_{s \in \Snt} T_s < \infty$, which is the sole place the no-EC assumption (Assumption~\ref{asm:finite_hitting}) enters the analysis. An end component among the non-terminal states would let the agent avoid termination forever, driving $T_s$ (and with it $\beta$) to $1$ and destroying the contraction. In this sense the hitting time is the reachability analogue of a discount factor, and $1/\max_s T_s$ the effective discount gap, intrinsic to the MDP rather than a constant the learner chooses.
\subsection{Q-Learning as Stochastic Approximation}
The learner cannot apply $\H$ exactly, only through a single sampled successor per step. Writing the update as one exact step of $\H$ plus a mean-zero noise term casts it as a \emph{stochastic-approximation} recursion. At the visited pair $(\SApair{})$,
\begin{equation}\label{eq:sa-update}
    Q_{t+1} = (1 - \alpha_t)\,Q_t + \alpha_t\, \H Q_t + \alpha_t\, \err_t,
\end{equation}
where $\err_t \coloneqq \max_{a' \in \A(s')} Q_t(s', a') - (\H Q_t)(\SApair{})$. Because $s'$ is drawn from the true kernel, $\E[\err_t \mid \SApair{t}] = 0$, so the noise is unbiased. Since every $Q_t \in [0,1]$ (each update is a convex combination of values in $[0,1]$), it is bounded, $|\err_t| \le 1$. Intuitively, if a pair is revisited without end while the steps shrink at the right rate, these mean-zero errors cancel in aggregate and only their zero mean survives. Two conditions make this precise. The first is a consequence of $\epsilon$-greedy exploration, the second a standard step-size assumption.
\Theoremspace
\begin{property}[Sufficient exploration]\label{prop:exploration}
    Under Assumption~\ref{asm:finite_hitting}, under the $\epsilon$-greedy policy with $\epsilon > 0$, every pair $(\SApair{}) \in \SAnt$ is visited infinitely often almost surely~\cite{Aminetal2021}.
\end{property}
\Theoremspace
\begin{assumption}[Robbins--Monro step sizes]\label{asm:lr}
    For every $(\SApair{}) \in \SAnt$, almost surely $\sum_{n} \alpha_n(\SApair{}) = \infty$ and $\sum_{n} \alpha_n(\SApair{})^2 < \infty$.
\end{assumption}
Two indices are in play and it is worth separating them. The \emph{schedule} $\alpha_n(\SApair{})$ is indexed by the visit count $n$ and is what Line~12 applies as $\alpha_{N(\SApair{})}(\SApair{})$. The \emph{applied} step size at global time $t$, written $\alpha_t$ in~\eqref{eq:sa-update} and in Lemma~\ref{lem:tsitsiklis}, equals $\alpha_{N_t(\SApair{})}(\SApair{})$ at the pair visited at time $t$ and $0$ at every other pair. Since the vanishing terms contribute nothing, the two sums agree, $\sum_t \alpha_t(\SApair{}) = \sum_n \alpha_n(\SApair{})$, and likewise for the squares. This is why Assumption~\ref{asm:lr} can be stated on the schedule alone while Lemma~\ref{lem:tsitsiklis} consumes it in global time.

Together these place~\eqref{eq:sa-update} within the scope of a stochastic-approximation lemma, adapted from~\cite{tsitsiklis1994asynchronous}. It is worth noting that the two conditions interlock. The step-size sums of Assumption~\ref{asm:lr} advance only when the pair is visited, so they are realizable precisely because Property~\ref{prop:exploration} visits every pair infinitely often. Property~\ref{prop:exploration} in turn requires Assumption~\ref{asm:finite_hitting}: episodes restart at $\snot$ only upon reaching $\Sterm$, and an end component among the non-terminal states would trap the $\epsilon$-greedy walk forever, so that pairs outside it are visited finitely often. It also uses the convention of Section~\ref{sec:prelims} that every state is reachable from $\snot$; at an unreachable pair the sum $\sum_n \alpha_n(\SApair{})$ would be empty and Assumption~\ref{asm:lr} unsatisfiable.
\Theoremspace
\begin{lemma}[{\cite{tsitsiklis1994asynchronous}}]\label{lem:tsitsiklis}
    Let $\{\mathcal{F}_t\}$ be a filtration, let $\alpha_t, \err_t, W_t$ be scalar random variables with $\alpha_t$ and $W_t$ being $\mathcal{F}_t$-measurable and $\err_t$ being $\mathcal{F}_{t+1}$-measurable, and let $B \ge 0$ be a constant such that, with probability $1$, $\E[\err_t \mid \mathcal{F}_t] = 0$, $\E[\err_t^2 \mid \mathcal{F}_t] \le B$, $\alpha_t \in [0,1]$, $\sum_t \alpha_t = \infty$, and $\sum_t \alpha_t^2 < \infty$. If $W_{t+1} = (1 - \alpha_t) W_t + \alpha_t \err_t$, then $W_t \to 0$ almost surely.
\end{lemma}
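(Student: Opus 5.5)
We prove the lemma by an explicit-solution argument: unroll the recursion, control the weighted noise sum as a square-integrable martingale, and let the forgetting factor kill the initial condition. Write $\Pi_{k,t} \coloneqq \prod_{j=k}^{t-1}(1-\alpha_j)$ (empty product $=1$). Unrolling the linear recursion gives
\[
    W_t \;=\; \Pi_{0,t}\, W_0 \;+\; \sum_{k=0}^{t-1} \Pi_{k+1,t}\,\alpha_k \err_k .
\]
Since $\alpha_j \in [0,1]$ and $1-x \le e^{-x}$, we have $\Pi_{0,t} \le \exp(-\sum_{j<t}\alpha_j) \to 0$ almost surely by $\sum_t \alpha_t = \infty$. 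So the deterministic part vanishes, and it remains to show that the noise part tends to $0$.

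For the noise part, consider the martingale $M_t \coloneqq \sum_{k<t} \alpha_k \err_k$. Its increments are $\mathcal{F}_{k+1}$-measurable with $\E[\alpha_k\err_k \mid \mathcal{F}_k] = \alpha_k \E[\err_k\mid\mathcal{F}_k] = 0$, using that $\alpha_k$ is $\mathcal{F}_k$-measurable. Its predictable quadratic variation is $\sum_k \alpha_k^2 \E[\err_k^2\mid\mathcal{F}_k] \le B\sum_k \alpha_k^2 < \infty$ almost surely. Because $\sum\alpha_k^2$ is only almost surely finite, not bounded, I would use the standard localization: for each $C$, stop at the first $t$ with $\sum_{k\le t}\alpha_k^2 > C$. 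The stopped martingale is $L^2$-bounded and hence converges. Taking the union over $C \in \mathbb{N}$ gives that $M_t$ converges almost surely to a finite limit $M_\infty$.

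To pass from convergence of $M_t$ to convergence of $W_t$, set $U_t \coloneqq W_t - \Pi_{0,t}W_0$. Then $U_{t+1} = (1-\alpha_t)U_t + (M_{t+1}-M_t)$ with $U_0 = 0$. Define $R_t \coloneqq M_t - M_\infty \to 0$ and $D_t \coloneqq U_t - R_t$. A direct substitution gives
\[
    D_{t+1} = (1-\alpha_t)D_t - \alpha_t R_t .
\]
Thus $D$ is a stable averaging recursion driven by an input that vanishes. Fix $\delta > 0$ and choose $t_0$ with $|R_t| \le \delta$ for all $t \ge t_0$. Then for $t \ge t_0$,
\[
    |D_{t+1}| \le (1-\alpha_t)|D_t| + \alpha_t \delta ,
\]
so $|D_t| - \delta$ is multiplied by $(1-\alpha_t)$ whenever it is positive. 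By $\sum_t \alpha_t = \infty$ it is eventually at most $\delta$ in the limit, giving $\limsup_t |D_t| \le \delta$. Since $\delta$ is arbitrary, $D_t \to 0$, hence $U_t \to 0$, and therefore $W_t \to 0$ almost surely.

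The main obstacle is the martingale convergence step. The step sizes are random and only almost surely square-summable, so the $L^2$ bound must be obtained on stopped versions and then patched together; the rest is deterministic bookkeeping. An alternative I would keep in reserve is to apply the Robbins–Siegmund supermartingale lemma directly to $W_t^2$: we have $\E[W_{t+1}^2\mid\mathcal{F}_t] \le (1-\alpha_t)^2W_t^2 + \alpha_t^2 B \le W_t^2 - \alpha_t W_t^2 + \alpha_t^2B$. This yields that $W_t^2$ converges almost surely and $\sum_t \alpha_t W_t^2 < \infty$, and together with $\sum_t\alpha_t=\infty$ this forces the limit to be $0$.
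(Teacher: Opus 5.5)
Your proof is correct, but it takes a different route from the paper's. The paper works with second moments. From $\E[W_{t+1}^2\mid\mathcal{F}_t]\le(1-\alpha_t)^2W_t^2+\alpha_t^2B$ it takes total expectations and iterates to get $\E[W_t^2]\to0$. It then uses Robbins--Siegmund only to obtain almost-sure convergence of $W_t^2$, and identifies the limit as $0$ by Fatou.

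You instead unroll the recursion and obtain almost-sure convergence of $M_t=\sum_{k<t}\alpha_k n_k$ by localizing on $\sum_{k\le t}\alpha_k^2$. You then finish pathwise with the summation-by-parts recursion $D_{t+1}=(1-\alpha_t)D_t-\alpha_tR_t$. That last step is the argument of the paper's Lemma~\ref{lem:br-vanish}, used in inequality form on $|D_t|$.

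What your route buys is rigor under the lemma's stated hypotheses. It never pulls products of step sizes out of an expectation, so it covers random, adapted $\alpha_t$ that are only almost surely square-summable. It also needs no integrability of $W_0$.

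The paper's iteration step, by contrast, writes $\prod_k(1-\alpha_k)^2$ outside $\E[\cdot]$, which tacitly treats the step sizes as deterministic. In the application they are not deterministic: whether a pair is visited at time $t$, and hence $\alpha_t$, depends on earlier sampled successors, and so on earlier noise. The paper's route is shorter but skips this point.

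Your reserve argument combines the strengths of both. Keeping the $-\alpha_tW_t^2$ term in Robbins--Siegmund gives $\sum_t\alpha_tW_t^2<\infty$ almost surely. Together with $\sum_t\alpha_t=\infty$, this forces the almost-sure limit of $W_t^2$ to be $0$, with no need for the mean-square step or Fatou.
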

\subsection{Convergence}
We now combine the two ingredients, the contraction of Property~\ref{prop:contraction} and the noise of Lemma~\ref{lem:tsitsiklis}, into the main result.
\Theoremspace
\begin{theorem}\label{thm:main}
    Under Assumptions~\ref{asm:finite_hitting} and~\ref{asm:lr}, the Q-learning iterates satisfy $Q_t(\SApair{}) \to Q^*(\SApair{})$ almost surely for every $(\SApair{}) \in \SAnt$.
\end{theorem}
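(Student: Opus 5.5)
The plan is to follow the classical Tsitsiklis route for asynchronous stochastic approximation with a weighted-max-norm pseudo-contraction. We work with the error $\Delta_t \coloneqq Q_t - \Qstar$ on $\SAnt$. The recursion~\eqref{eq:sa-update} gives
\[
\Delta_{t+1}(s,a) = (1-\alpha_t)\Delta_t(s,a) + \alpha_t\bigl((\H Q_t)(s,a) - \Qstar(s,a)\bigr) + \alpha_t \err_t .
\]
Here $\alpha_t = 0$ at unvisited pairs. Since every $Q_t \in [0,1]^{\SAnt}$, Property~\ref{prop:contraction} applies at every $t$ and yields $|(\H Q_t)(s,a) - \Qstar(s,a)| \le \beta\, T_s \wnorm{\Delta_t}$. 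The filtration $\mathcal{F}_t$ is the history up to the choice of $(s_t,a_t)$. With respect to it, $\err_t$ is a martingale difference with $\E[\err_t^2\mid\mathcal{F}_t]\le 1$. By Property~\ref{prop:exploration} together with Assumption~\ref{asm:lr}, each pair's applied step sizes satisfy $\sum_t \alpha_t(s,a)=\infty$ and $\sum_t\alpha_t(s,a)^2<\infty$ almost surely.

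Next we split the error per pair. Let $W_t(s,a)$ solve $W_{t+1} = (1-\alpha_t)W_t + \alpha_t \err_t$ with $W_0=0$. Lemma~\ref{lem:tsitsiklis}, applied pair by pair, gives $W_t(s,a)\to 0$ almost surely; the conditions hold because the step sizes are $\mathcal{F}_t$-measurable. The remainder $Y_t \coloneqq \Delta_t - W_t$ then obeys the noise-free recursion
\[
Y_{t+1} = (1-\alpha_t) Y_t + \alpha_t\bigl((\H Q_t) - \Qstar\bigr).
\]
We run a nested-box argument. Set $D_0 = \sup_t \wnorm{\Delta_t}$, which is finite because $|\Delta_t|\le 1$ and $T_s\ge 1$. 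Fix $\varepsilon>0$ with $\beta+2\varepsilon<1$ (up to scaling), and put $D_{k+1} = (\beta+2\varepsilon)D_k$. The induction hypothesis is that $\wnorm{\Delta_t}\le D_k$ for all $t \ge \tau_k$. Under it, $|(\H Q_t - \Qstar)(s,a)| \le \beta D_k T_s$. Comparing $|Y_t(s,a)|$ with the deterministic sequence $Z_{t+1}=(1-\alpha_t)Z_t+\alpha_t\beta D_kT_s$ gives
\[
\limsup_t |Y_t(s,a)|/T_s \le \beta D_k .
\]
This uses $\sum\alpha_t=\infty$ to forget the initial condition. Once $|W_t(s,a)|\le \varepsilon D_k T_s$ for large $t$, we obtain $\wnorm{\Delta_t}\le(\beta+2\varepsilon)D_k$ eventually. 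This defines $\tau_{k+1}$. Since $D_k\to 0$, $\Delta_t\to 0$ almost surely.

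The main obstacle is the coupling inside this induction. The comparison for $Y$ must be restarted at $\tau_k$, and the split is sensitive to that restart. I would therefore redefine $W$ from time $\tau_k$ onward, $W^{(k)}_{\tau_k}=0$, and bound $|Y^{(k)}_{\tau_k}| \le |\Delta_{\tau_k}|\le D_kT_s$. The crucial point is that Lemma~\ref{lem:tsitsiklis} is applied to the restarted process, with $\tau_k$ a random time. This is the standard move in~\cite{tsitsiklis1994asynchronous}. The clean way to handle it is the variant with a bounded scaling: the noise bound is uniform, $|\err_t|\le 1$, so one can instead invoke that $W$-processes started at any time vanish on a common probability-one event. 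That holds because $\sum_{u\ge t}\alpha_u\err_u\prod(1-\alpha)$ tends to zero uniformly over starting times, following from almost-sure convergence of the martingale $\sum_t\alpha_t\err_t$. A second, smaller point is to ensure each $Q_t$ stays in $[0,1]$ so Property~\ref{prop:contraction} is applicable. This follows because each update is a convex combination of entries in $[0,1]$ given $\alpha\in(0,1]$ and the clamped terminal values.
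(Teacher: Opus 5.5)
Your proposal is correct and follows essentially the same route as the paper: the weighted-norm contraction of Property~\ref{prop:contraction}, a per-pair noise process driven to zero by Lemma~\ref{lem:tsitsiklis}, and a nested-ceiling induction with ratio $\beta+2\varepsilon$. The paper's sandwich $|\Delta_t - wW_t|\le wY_t$ is your comparison $|Y_t|\le Z_t$, with $W$ rescaled by $w$.

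Your first formulation already avoids the restart obstacle you flag. With $W$ started once at time $0$, $Y=\Delta-W$ is noise-free for all $t$, so only the pathwise comparison needs restarting at $\tau_k$. That is slightly cleaner than the paper, which restarts its noise process at the random time $t_k$ and applies Lemma~\ref{lem:tsitsiklis} to it without comment; your uniform-over-start-times remark is what justifies that step.
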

\noindent\emph{Proof sketch.}
We track the error $\Delta_t = Q_t - \Qstar$ through ever-smaller ceilings. Subtracting $\Qstar$ from~\eqref{eq:sa-update} gives $\Delta_{t+1} = (1-\alpha_t)\Delta_t + \alpha_t(\H Q_t - \Qstar) + \alpha_t \err_t$, whose driving terms are the two ingredients above: the contraction term, with weighted norm at most $\beta\,\wnorm{\Delta_t}$ (Property~\ref{prop:contraction}), and the noise term, averaging to $0$ (Lemma~\ref{lem:tsitsiklis}). Fix $\delta$ with $\beta + 2\delta < 1$ and set $X_k = (\beta + 2\delta)^k \to 0$. If $\wnorm{\Delta_t} \le X_k$ from some time on, the contraction pulls the error toward $\beta X_k$ while the accumulated noise (an auxiliary sequence to which Lemma~\ref{lem:tsitsiklis} applies) eventually stays below $\delta X_k$, pushing $\wnorm{\Delta_t}$ below $X_{k+1}$. Since $X_k \to 0$, $\wnorm{\Delta_t} \to 0$. The full induction is in the sup.\ mat. \hfill$\square$

\subsection{From Value Convergence to the Asymptotic Guarantee}
Theorem~\ref{thm:main} is a statement about the \emph{estimates}, whereas Definition~\ref{def:asymptotic} is a statement about the \emph{induced policies}. The two are not interchangeable, and the bridge between them is where Assumption~\ref{asm:finite_hitting} is used a second time. We make the induced policy explicit: let
    $\pi_t(s) \;\in\; \argmax_{a \in \A(s)} Q_t(s,a), \qquad s \in \Snt$,
be any greedy policy with respect to the current estimate, ties broken arbitrarily.
\Theoremspace
\begin{corollary}\label{cor:asymptotic}
Under Assumptions~\ref{asm:finite_hitting} and~\ref{asm:lr}, Algorithm~\ref{alg:qlearning} converges asymptotically in the sense of Definition~\ref{def:asymptotic}: $V^{\pi_t}(\snot) \to \Vstar(\snot)$ almost surely. In fact the convergence is eventually exact, i.e., almost surely $V^{\pi_t}(\snot) = \Vstar(\snot)$ for all large $t$.
\end{corollary}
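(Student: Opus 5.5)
The plan is to pass from the almost-sure convergence $Q_t \to \Qstar$ of Theorem~\ref{thm:main} to the induced greedy policies in two steps. First, a finite action gap shows that $\pi_t$ eventually selects only $\Qstar$-maximizing actions. Second, Assumption~\ref{asm:finite_hitting} shows that every policy selecting only such actions is optimal. The second step is the only non-routine part, and it is where the no-EC assumption enters again. In a general MDP a policy that is greedy with respect to $\Qstar$ can cycle forever inside an end component and reach the target with probability $0$, even though every action it takes is $\Qstar$-optimal.

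For the first step, define for $s \in \Snt$ the optimal action set $\A^*(s) \coloneqq \argmax_{a \in \A(s)} \Qstar(s,a)$, and the gap
\[
g \coloneqq \min\{\Vstar(s) - \Qstar(s,a) : s \in \Snt,\ a \in \A(s)\setminus \A^*(s)\}.
\]
Set $g = \infty$ if no suboptimal pair exists. Since $\SAnt$ is finite, $g > 0$. Fix an outcome on which Theorem~\ref{thm:main} holds; this is a probability-one event. Finiteness of $\SAnt$ gives a time $t_0$ with $\max_{(s,a)\in\SAnt}|Q_t(s,a)-\Qstar(s,a)| < g/2$ for all $t \ge t_0$. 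For $t \ge t_0$, any $a \notin \A^*(s)$ and any $a^* \in \A^*(s)$ satisfy
\[
Q_t(s,a) < \Qstar(s,a) + g/2 \le \Vstar(s) - g/2 < Q_t(s,a^*).
\]
Hence every greedy choice, under any tie-breaking rule, satisfies $\pi_t(s) \in \A^*(s)$ for all $s \in \Snt$.

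For the second step, let $\pi$ be any deterministic policy with $\pi(s) \in \A^*(s)$ on $\Snt$. We show $V^\pi = \Vstar$.
\begin{itemize}
\item By~\eqref{def:vboe} and~\eqref{def:qboe}, for $s \in \Snt$ we have $\Vstar(s) = \sum_{s'} \P(s' \mid s,\pi(s))\,\Vstar(s')$.
\item Consequently, under $\P^\pi$ the process $\Vstar(s_{n \wedge T})$ is a bounded martingale, where $T$ is the hitting time of $\Sterm$. This gives $\Vstar(s) = \E^\pi[\Vstar(s_{n\wedge T}) \mid s_0 = s]$ for every $n$.
\item By Property~\ref{prop:termination}, $T < \infty$ almost surely. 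Letting $n \to \infty$ by bounded convergence yields $\Vstar(s) = \E^\pi[\Vstar(s_T)]$.
\item Since $\Vstar$ is $1$ on $\ST$ and $0$ on $\SN$, and terminals are absorbing, $\E^\pi[\Vstar(s_T)] = \P^\pi(\tau \models \ST \mid s_0 = s) = V^\pi(s)$.
\end{itemize}
Equivalently, one could argue that the policy-evaluation operator of $\pi$ has a unique fixed point, by the same hitting-time contraction as in Property~\ref{prop:contraction}.

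Combining the two steps, on the probability-one event we get $V^{\pi_t}(\snot) = \Vstar(\snot)$ for all $t \ge t_0$. This is the eventual exactness claimed, and it implies in particular the convergence required by Definition~\ref{def:asymptotic}.
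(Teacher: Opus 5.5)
Your proposal is correct and follows the paper's two-step route. A positive action gap forces the greedy policies $\pi_t$ to be $\Qstar$-greedy from some time on, and the no-EC assumption then makes every $\Qstar$-greedy policy optimal. The only difference is how you justify the second step: you apply optional stopping to the bounded martingale $\Vstar(s_{n\wedge T})$ using $T<\infty$ almost surely from Property~\ref{prop:termination}, whereas the paper invokes uniqueness of the fixed point of $\pi$'s policy-evaluation operator via the hitting-time contraction, which you also note as an equivalent alternative.
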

The argument has two steps---greediness eventually selects only $\Qstar$-optimal actions, and every $\Qstar$-greedy policy is optimal (again via Assumption~\ref{asm:finite_hitting})---and is given in full in the sup.\ mat.

We stress that Corollary~\ref{cor:asymptotic} concerns the \emph{greedy} policies $\{\pi_t\}$, not the $\epsilon$-greedy behavior policy that Algorithm~\ref{alg:qlearning} follows while learning. The distinction is material: with the fixed $\epsilon > 0$ of Line~1, the behavior policy plays a uniformly random action with probability $\epsilon$ forever, so its value stays bounded away from $\Vstar(\snot)$ and Definition~\ref{def:asymptotic} would fail for it. It is the estimates, and the policies they induce, that converge.

%% file: Sections/empirical.tex
\section{Experiments}
\label{sec:experiments}

We ask whether the model-free route yields a practical algorithm and how it compares against the model-based state of the art. Our evaluation answers the first affirmatively and, on the second, finds a substantial margin in our favor.

\subsection{Implementation Details}
\label{sec:exp-impl}

We evaluate two versions of \QUASAR{}, \QUASARN{} and \QUASARBR{}. 
We implement both versions in Python with NumPy. Each maintains a single dense $Q$-table and so occupies $O(|\S|\,|\A|)$ memory. We use the per-pair step size $\alpha_t = 1/N(\SApair{})$, i.e., the inverse visit count, and the uniform exploration policy $\epsilon = 1$, i.e., we sample actions uniformly from $\A(s)$. An episode ends on reaching $\Sterm$, or is truncated after $H = 10\,|\S \setminus \ST|$ steps.

\subsection{Experimental Setup}
\label{sec:exp-setup}

We evaluate on all nine models from the standardized Quantitative Verification Benchmark Set~\cite{qvbs2019} (Table~\ref{tab:main}). Three of the nine (\textsc{csma}, \textsc{pacman}, \textsc{zeroconf}) contain non-terminal MECs and hence fall outside Assumption~\ref{asm:finite_hitting}. For these, marked $\dagger$ throughout, we first apply the standard MEC-quotient procedure~\cite{deAlfaro1997} to obtain an MEC-free MDP.

\begin{table*}[!t]
\centering
\caption{Benchmark MDPs and head-to-head results. $d$ = BFS goal depth of $\snot$. $\dagger$ = trained on the MEC-quotient, sizes reported post-collapse. Final error is measured after 36\,h, and ``samples to $<10^{-2}$ error'' is the number of samples needed to sustain that error, with --- marking a benchmark on which it was never sustained. All entries are medians over 5 seeds.}
\label{tab:main}
\small
\setlength{\tabcolsep}{4pt}
\begin{tabular}{@{}lrrrcrrrrrr@{}}
\toprule
& \multicolumn{4}{c}{Benchmark} & \multicolumn{2}{c}{\QUASARBR{}} & \multicolumn{2}{c}{\QUASARN{}} & \multicolumn{2}{c}{\SPAC{}} \\
\cmidrule(lr){2-5}\cmidrule(lr){6-7}\cmidrule(lr){8-9}\cmidrule(lr){10-11}
MDP & $|S|$ & $|S{\times}A|$ & $d$ & $V^*(\snot)$
  & \shortstack[r]{Final\\error}
  & \shortstack[r]{Samples to\\$<10^{-2}$ error}
  & \shortstack[r]{Final\\error}
  & \shortstack[r]{Samples to\\$<10^{-2}$ error}
  & \shortstack[r]{Final\\error}
  & \shortstack[r]{Samples to\\$<10^{-2}$ error} \\
\midrule
ij.3               &      7 &     12 &  2 & $1$               & $9{\times}10^{-13}$  & 5        & $1.3{\times}10^{-4}$ & 147 & $6.9{\times}10^{-3}$ & $2.8{\times}10^{7}$ \\
philosophers.3     &    956 & 3\,342 &  4 & $1$               & $0$                  & 15       & $1.5{\times}10^{-5}$ & $3.2{\times}10^{6}$ & $2.0{\times}10^{-2}$ & ---                 \\
rabin.3            & 27\,766 & 45\,636 &  4 & $1$              & $0$                 & 7        & $1.7{\times}10^{-5}$ & $3.2{\times}10^{6}$ & $7.2{\times}10^{-3}$ & $9.3{\times}10^{8}$ \\
ij.10              & 1\,023 & 5\,120 &  9 & $1$               & $2{\times}10^{-11}$  & 68       & $5.5{\times}10^{-1}$ & --- & $1.1{\times}10^{-1}$ & ---                 \\
consensus.2        &    272 &    400 & 12 & $1$               & $6{\times}10^{-12}$  & 100      & $7.9{\times}10^{-1}$ & --- & $3.9{\times}10^{-2}$ & ---                 \\
firewire           &    623 &    714 & 76 & $1$               & $0$                  & 215      & $1.0$                & --- & $1.2{\times}10^{-1}$ & ---                 \\
pacman$^\dagger$   &    498 &    552 &  7 & $0.551$           & $1.0{\times}10^{-5}$ & $10^{4}$ & $1.6{\times}10^{-4}$ & $1.5{\times}10^{7}$ & $6.2{\times}10^{-2}$ & ---                 \\
zeroconf$^\dagger$ &    670 &    825 & 17 & $2{\cdot}10^{-5}$ & $2.0{\times}10^{-5}$ & 1        & $2.0{\times}10^{-5}$ & 1   & $6.5{\times}10^{-2}$ & ---                 \\
csma$^\dagger$     & 1\,038 & 1\,054 & 47 & $0.5$             & $5.0{\times}10^{-4}$ & $10^{7}$ & $5.0{\times}10^{-1}$ & --- & $4.1{\times}10^{-2}$ & ---                 \\
\bottomrule
\end{tabular}
\end{table*}

\begin{figure*}[!t]
\centering
\includegraphics[width=0.42\textwidth]{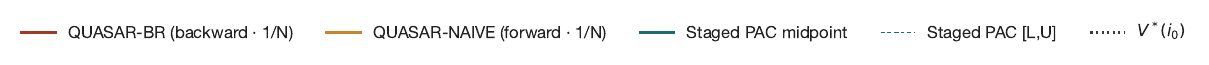}\\[0pt]
\begin{minipage}{0.265\textwidth}\centering
  \includegraphics[width=\linewidth]{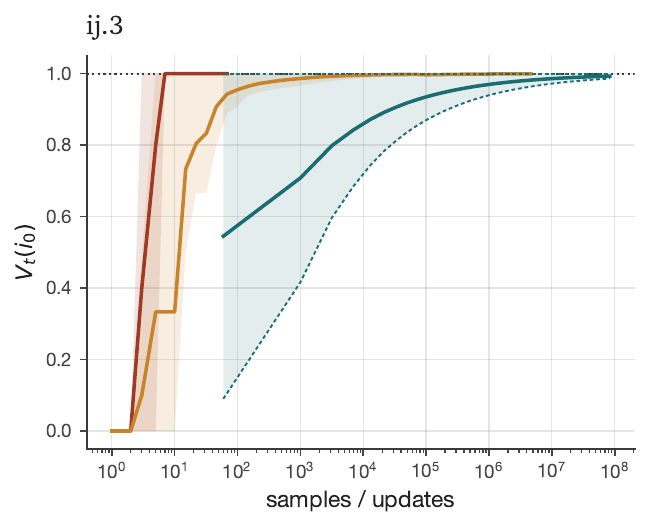}
\end{minipage}\hfill
\begin{minipage}{0.265\textwidth}\centering
  \includegraphics[width=\linewidth]{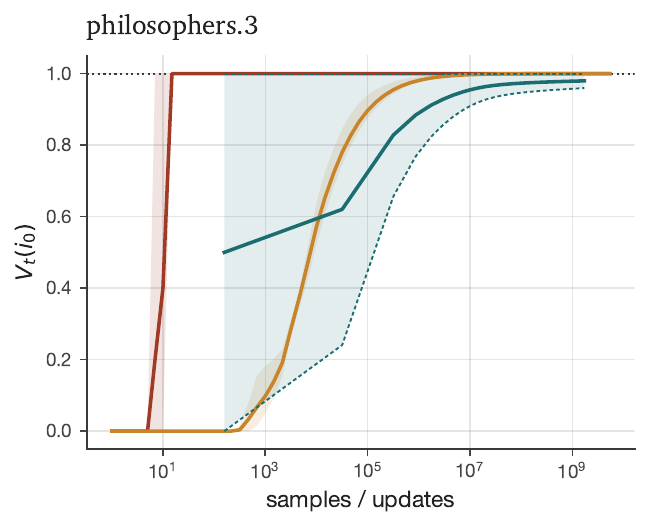}
\end{minipage}\hfill
\begin{minipage}{0.265\textwidth}\centering
  \includegraphics[width=\linewidth]{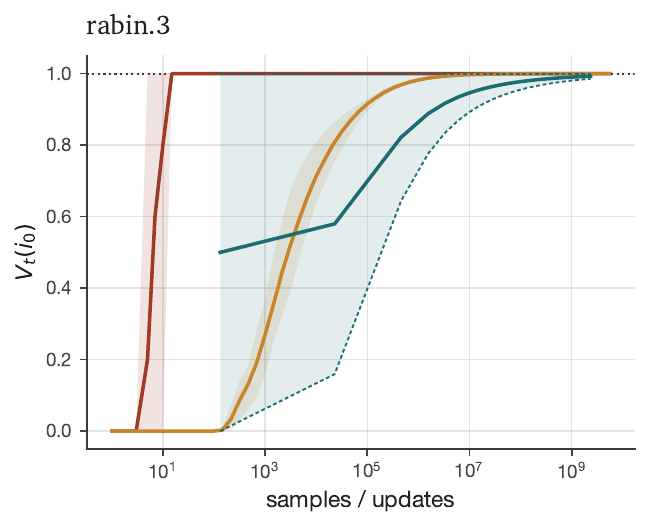}
\end{minipage}\\[0pt]
\begin{minipage}{0.265\textwidth}\centering
  \includegraphics[width=\linewidth]{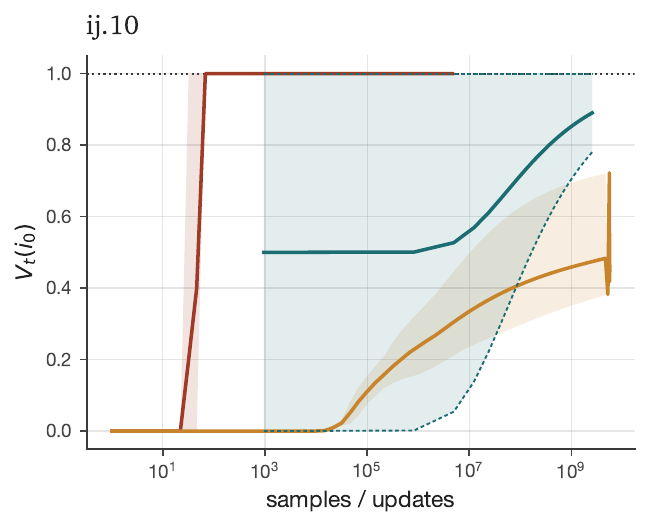}
\end{minipage}\hfill
\begin{minipage}{0.265\textwidth}\centering
  \includegraphics[width=\linewidth]{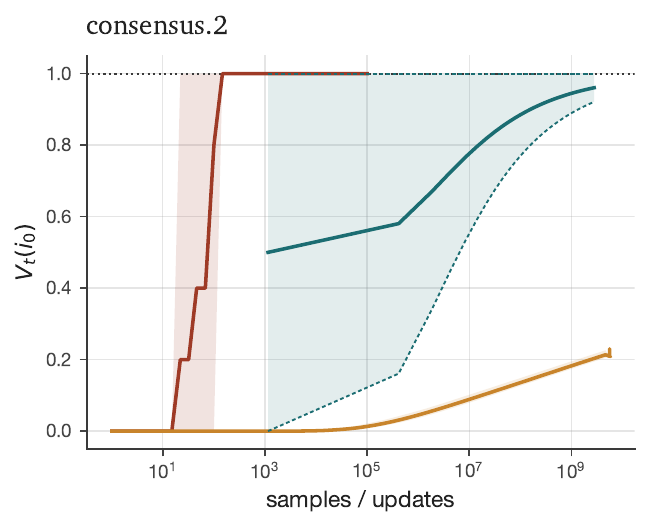}
\end{minipage}\hfill
\begin{minipage}{0.265\textwidth}\centering
  \includegraphics[width=\linewidth]{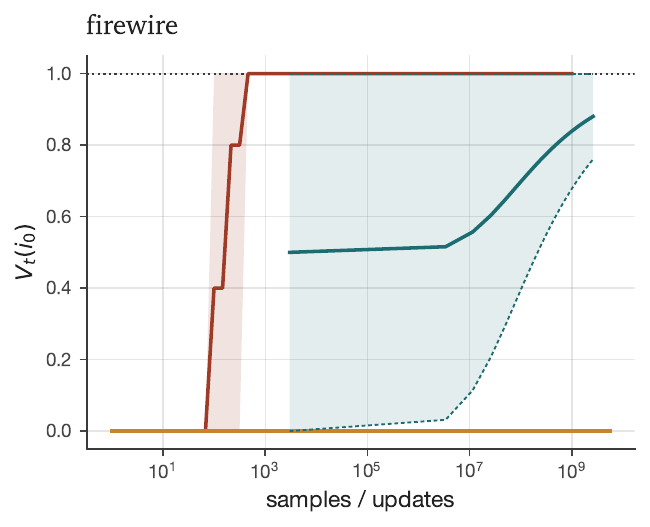}
\end{minipage}\\[0pt]
\begin{minipage}{0.265\textwidth}\centering
  \includegraphics[width=\linewidth]{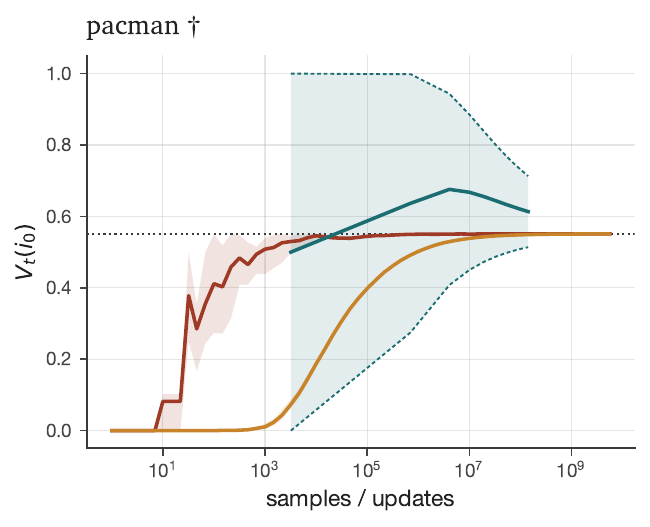}
\end{minipage}\hfill
\begin{minipage}{0.265\textwidth}\centering
  \includegraphics[width=\linewidth]{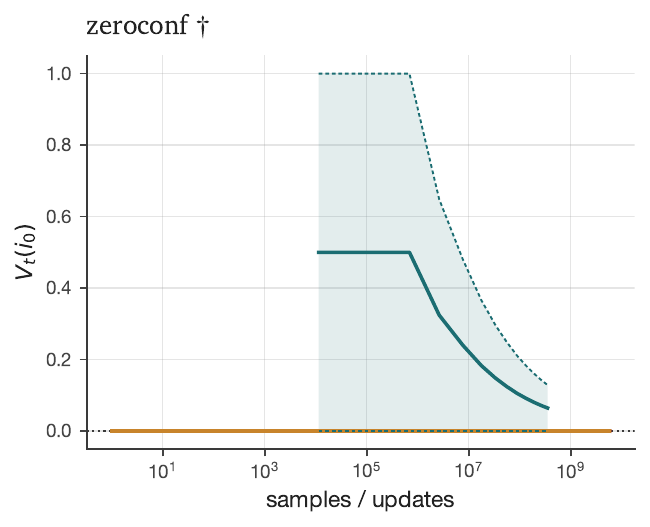}
\end{minipage}\hfill
\begin{minipage}{0.265\textwidth}\centering
  \includegraphics[width=\linewidth]{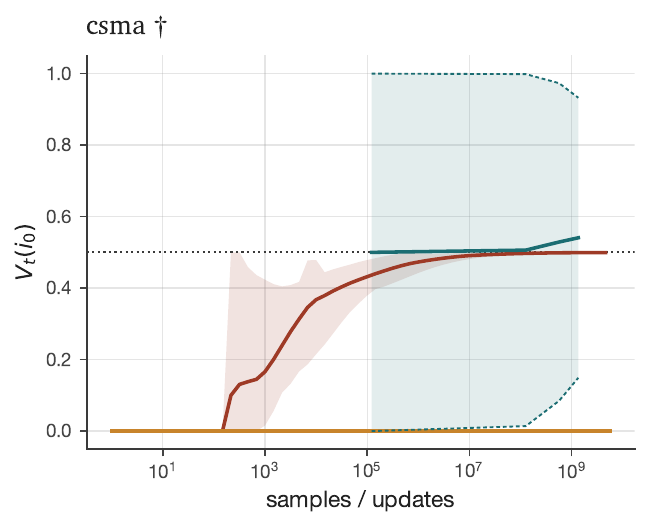}
\end{minipage}
\caption{Value estimate at the initial state vs.\ samples on all nine
benchmarks (seed mean; shaded band = min--max range over seeds; 5
seeds per series). \QUASARBR{} and \QUASARN{} 
against the \SPAC{} baseline: the
shaded band with dashed edges is the baseline's own output, its
$[L,U]$ bounds; the solid curve is the midpoint
$\tfrac{1}{2}(L{+}U)$, our minimax point readout of the interval; horizontal line = $V^*(\snot)$.
All three series run a 36\,h wall.
$\dagger$ = MEC-quotient input.}
\label{fig:headline}
\end{figure*}

To measure progress, we precompute $V^*(\snot)$ on every benchmark and record the \emph{initial-state value error}, which \QUASAR{} reports as $\verr = |V_t(\snot) - V^*(\snot)|$ and \SPAC{} as the error of its bound midpoint, $|\tfrac{1}{2}(L{+}U) - V^*(\snot)|$. We report this error against the number of samples consumed.
We compare against the prior algorithm with an asymptotic guarantee for reachability \SPAC{}~\cite{palasamudram2026asymptotic}. Comparison to all other methods without such a guarantee offers no meaningful yardstick for a convergence result. \SPAC{} grows a partial model by simulation and runs bounded value iteration with bounds $L(\snot) \le V^*(\snot) \le U(\snot)$, and we run their open-source implementation under their per-benchmark hyperparameter configurations~\cite{palasamudram2026code}.

We run each algorithm on each benchmark with $5$ random seeds under a $36$\,h wall-clock timeout, one task per run on a cluster with $2$ CPU cores and $16$\,GB RAM each. \SPAC{} uses its authors' staged stopping rule until the timeout. We stop a \QUASAR{} run once $\qerr < 10^{-3}$ has held for three consecutive checkpoints on a log-spaced grid.

\subsection{Results}
\label{sec:exp-main}

Table~\ref{tab:main} and Figure~\ref{fig:headline} report the final error of each learner and the number of samples it needs to sustain the initial-state error below $10^{-2}$.

\QUASARBR{} converges on all nine benchmarks within the $36$\,h budget, and its margin over the model-based state of the art is decisive. On the six instances with $V^*(\snot) = 1$ it reaches numerically exact values, error at most $2{\times}10^{-11}$, within $5$ to $215$ samples, fewer updates than a single return path to the goal, and its final error is below \SPAC{}'s on every benchmark under identical budgets. In particular, \SPAC{} sustains error below $10^{-2}$ on only two of the nine, and even there it spends $2.8{\times}10^{7}$ samples on \textsc{ij.3} and $9.3{\times}10^{8}$ on \textsc{rabin.3} where \QUASARBR{} needs $5$ and $7$. On the remaining seven it halts at the timeout with error between $2{\times}10^{-2}$ and $1.2{\times}10^{-1}$ and its bounds still open (Figure~\ref{fig:headline}). This understates the gap on \textsc{csma}$^\dagger$, whose initial bounds $[0,1]$ have midpoint $0.5 = V^*(\snot)$, so the baseline's reported error starts near zero at full interval width and our reported speedup is, if anything, conservative.

The two versions of \QUASAR{} part ways as the goal depth $d$ grows. \QUASARN{} keeps pace on the shallow benchmarks but fails outright on the deepest, leaving the error at $1.0$ on \textsc{firewire} ($d = 76$) after $2.8{\times}10^{8}$ samples even though every episode reaches the goal, where \QUASARBR{} reaches error $0$ within $215$. The update ordering is thus not a minor optimization but the difference between converging and not, and every \QUASAR{} entry in Table~\ref{tab:main} is accordingly a \QUASARBR{} entry.

A single mechanism accounts for both gaps. Under the online $1/N$ schedule each $Q(\SApair{})$ averages every target it has seen, so a corrected value advances one state per episode and its propagation time is exponential in $d$, whereas a reverse sweep takes $\alpha = 1$ on a first visit and runs exact value iteration along the sampled path in one pass. That same sweep costs essentially one goal-reaching episode, whereas \SPAC{} must first estimate the model to uniform accuracy at a cost scaling with the model size and with $1/p_{\min}$.

Finally, these findings are not an artifact of averaging over seeds. Sup.\ mat.~\ref{app:perseed} disaggregates them, and every finding holds on every seed.

%% file: Sections/conclusion.tex
\section{Conclusion}
\label{sec:conclusion}

We presented \QUASAR{}, the first model-free algorithm that learns reachability objectives with asymptotic guarantees on the no-EC fragment of MDPs, matching the guarantee of model-based methods with only $O(|\S|\,|\A|)$ memory while improving substantially on their empirical state of the art across a standardized benchmark suite. This is substantial progress towards making specification-guided RL practical. 
While the no-MEC assumption is a building block to learning on arbitrary MDPs, addressing the later is the most pressing next step.

%% file: Sections/appendix.tex
\section{Per-Seed Trajectories}
\label{app:perseed}

Figures~\ref{fig:perseed-vt} and~\ref{fig:perseed-qerr} disaggregate the headline results of $\S$\ref{sec:exp-main} to individual seeds. Each panel carries a second (top) axis converting samples to wall time via the panel's median sampling rate, which is near-constant within a run.

\begin{figure*}[t]
\centering
\includegraphics[width=.55\textwidth]{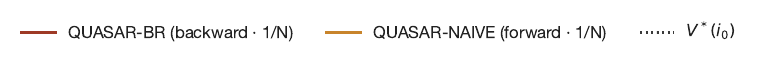}\\[1pt]
\begin{minipage}{0.32\textwidth}\centering
  \includegraphics[width=\linewidth]{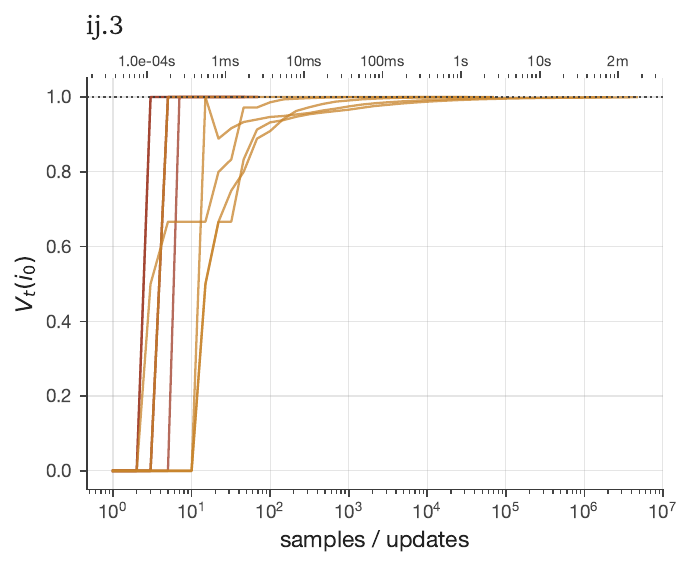}
\end{minipage}\hfill
\begin{minipage}{0.32\textwidth}\centering
  \includegraphics[width=\linewidth]{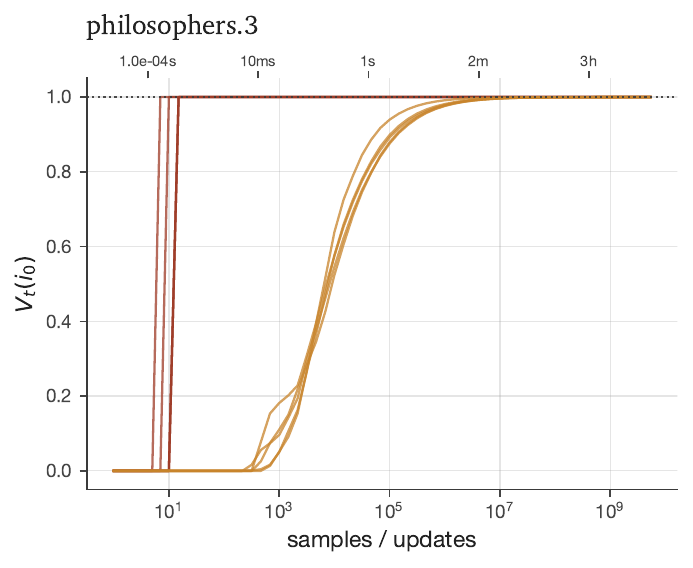}
\end{minipage}\hfill
\begin{minipage}{0.32\textwidth}\centering
  \includegraphics[width=\linewidth]{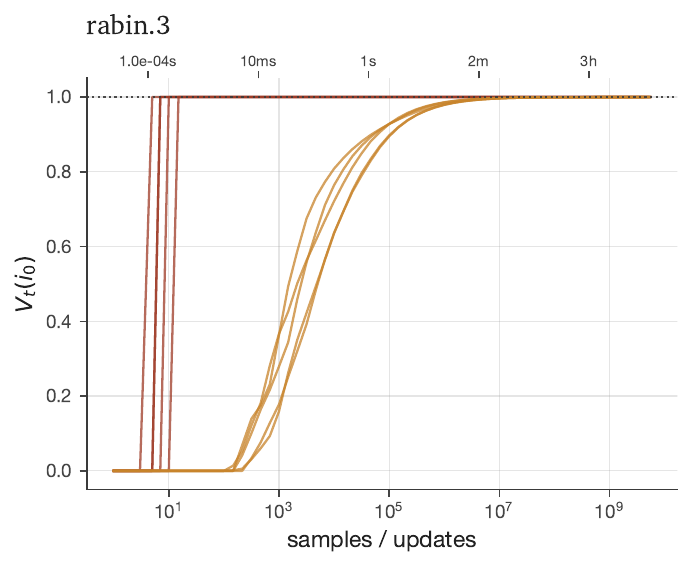}
\end{minipage}\\[2pt]
\begin{minipage}{0.32\textwidth}\centering
  \includegraphics[width=\linewidth]{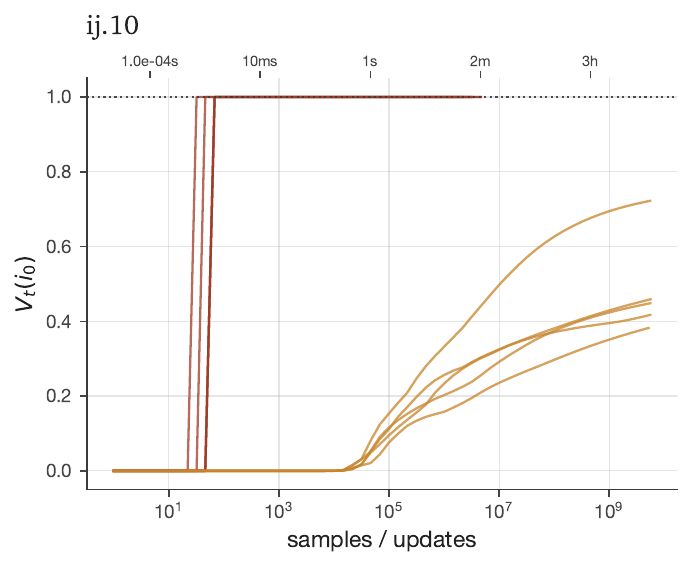}
\end{minipage}\hfill
\begin{minipage}{0.32\textwidth}\centering
  \includegraphics[width=\linewidth]{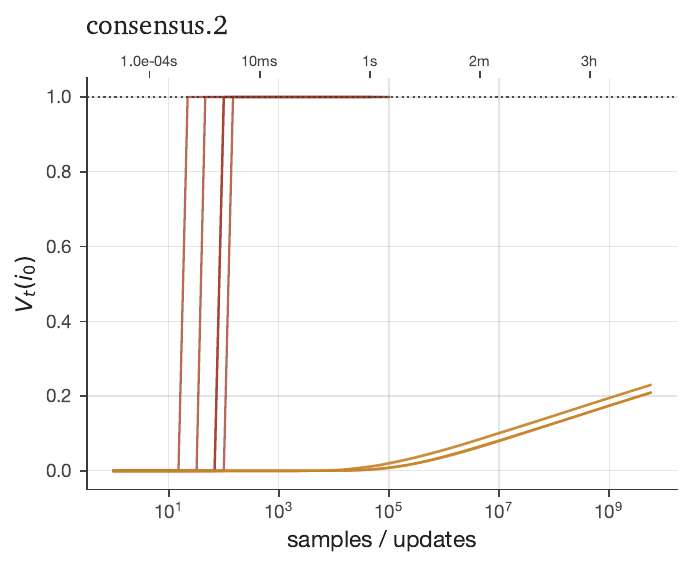}
\end{minipage}\hfill
\begin{minipage}{0.32\textwidth}\centering
  \includegraphics[width=\linewidth]{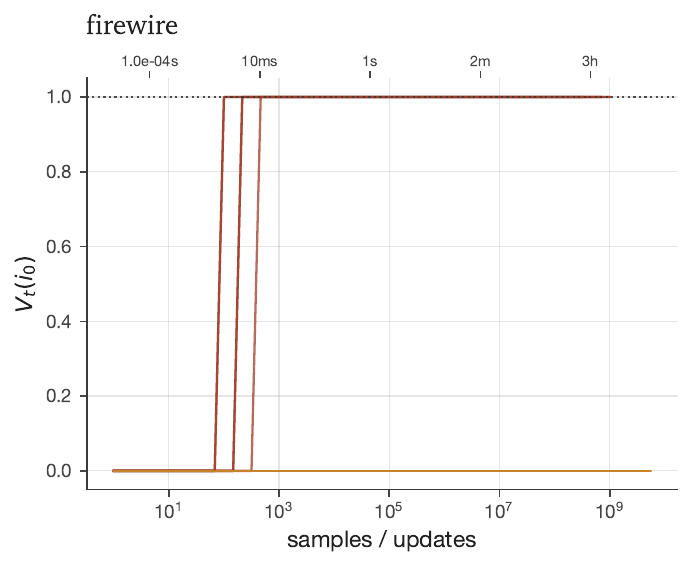}
\end{minipage}\\[2pt]
\begin{minipage}{0.32\textwidth}\centering
  \includegraphics[width=\linewidth]{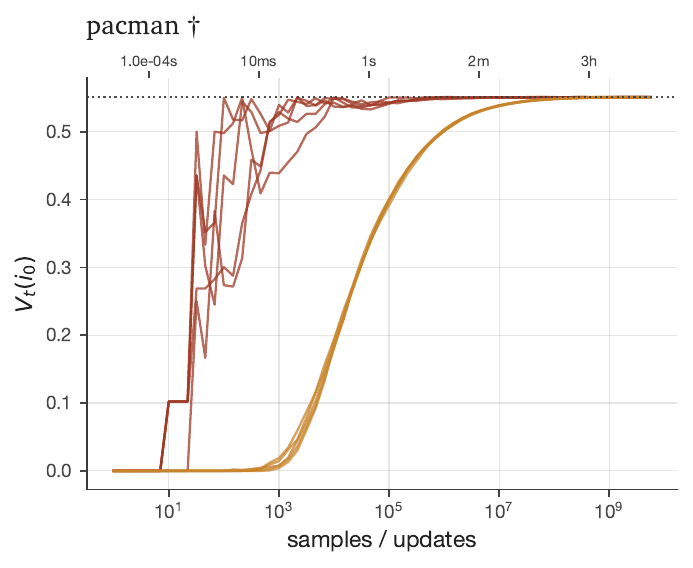}
\end{minipage}\hfill
\begin{minipage}{0.32\textwidth}\centering
  \includegraphics[width=\linewidth]{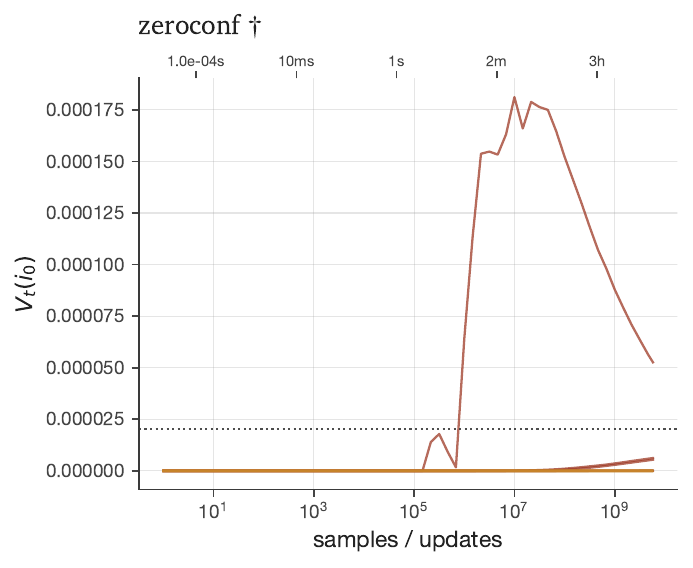}
\end{minipage}\hfill
\begin{minipage}{0.32\textwidth}\centering
  \includegraphics[width=\linewidth]{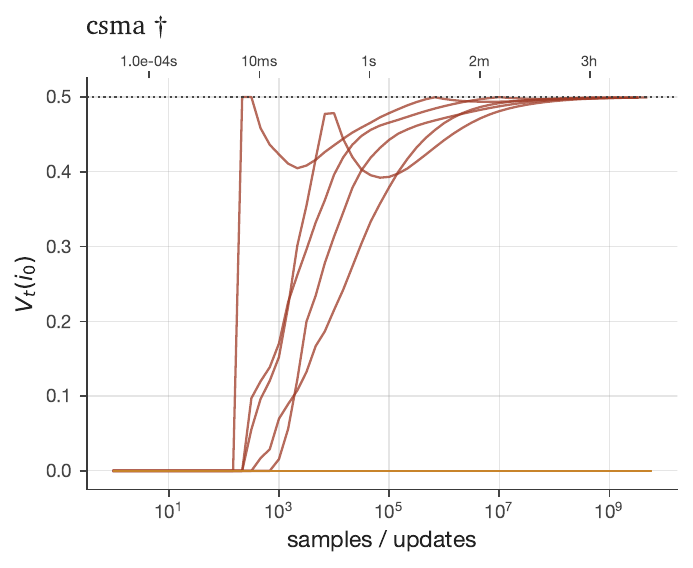}
\end{minipage}
\caption{Per-seed value estimate at the initial state vs.\ samples:
one line per run, 10 runs per panel --- 5 \QUASARBR{} seeds and
5 \QUASARN{} seeds. Bottom axis:
samples (log). Top axis: wall time (log), aligned via the \QUASARBR{}
runs' median sampling rate (samples/sec is near-constant within a run;
indicative only for \QUASARN{}, which paces differently).}
\label{fig:perseed-vt}
\end{figure*}

\begin{figure*}[t]
\centering
\includegraphics[width=.55\textwidth]{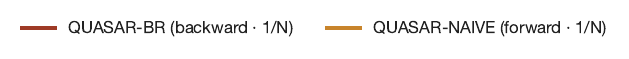}\\[1pt]
\begin{minipage}{0.32\textwidth}\centering
  \includegraphics[width=\linewidth]{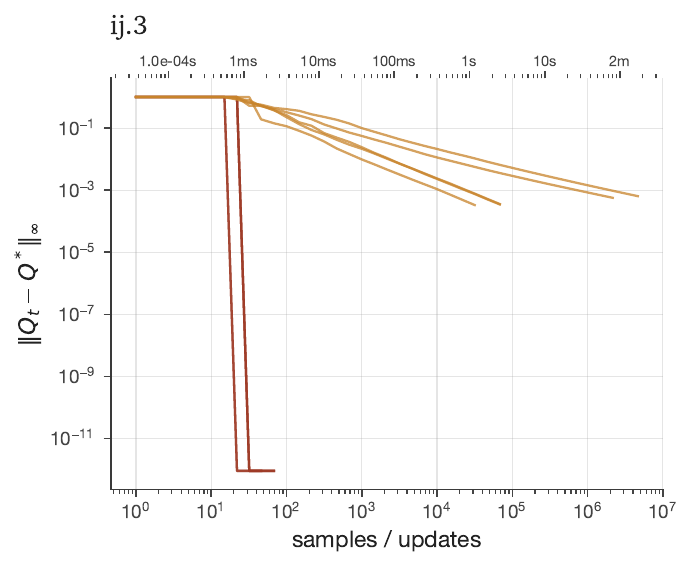}
\end{minipage}\hfill
\begin{minipage}{0.32\textwidth}\centering
  \includegraphics[width=\linewidth]{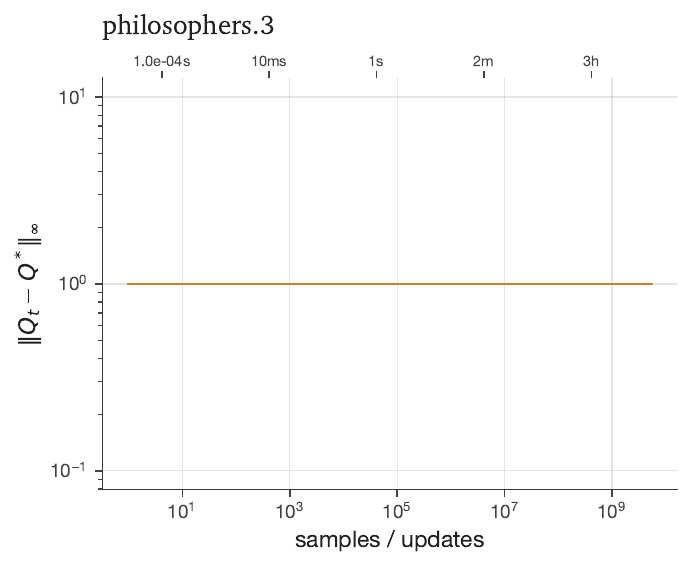}
\end{minipage}\hfill
\begin{minipage}{0.32\textwidth}\centering
  \includegraphics[width=\linewidth]{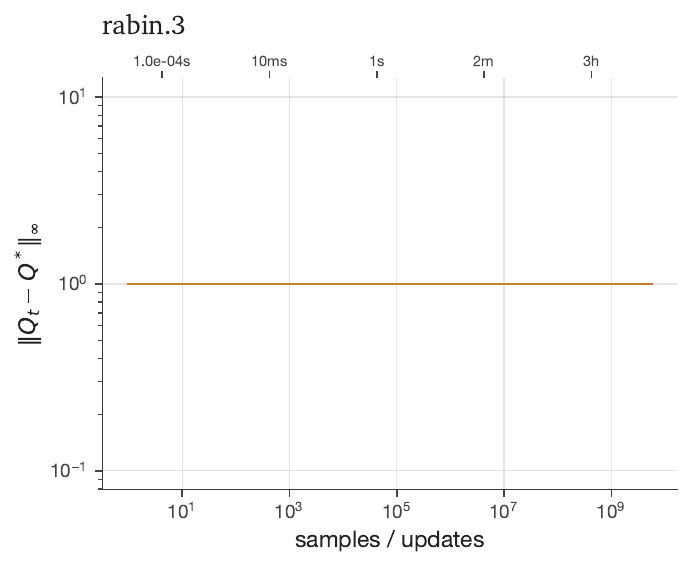}
\end{minipage}\\[2pt]
\begin{minipage}{0.32\textwidth}\centering
  \includegraphics[width=\linewidth]{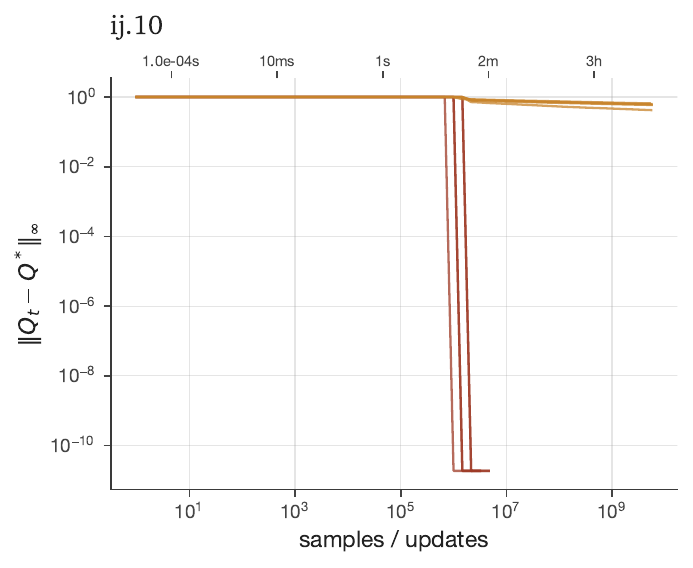}
\end{minipage}\hfill
\begin{minipage}{0.32\textwidth}\centering
  \includegraphics[width=\linewidth]{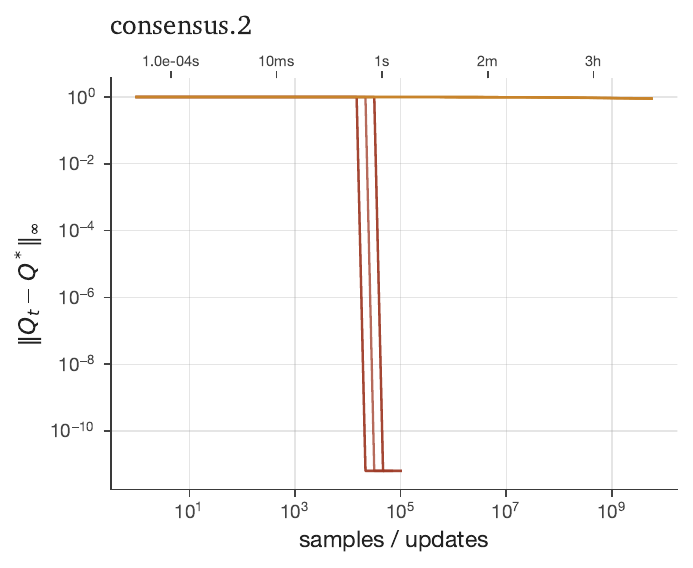}
\end{minipage}\hfill
\begin{minipage}{0.32\textwidth}\centering
  \includegraphics[width=\linewidth]{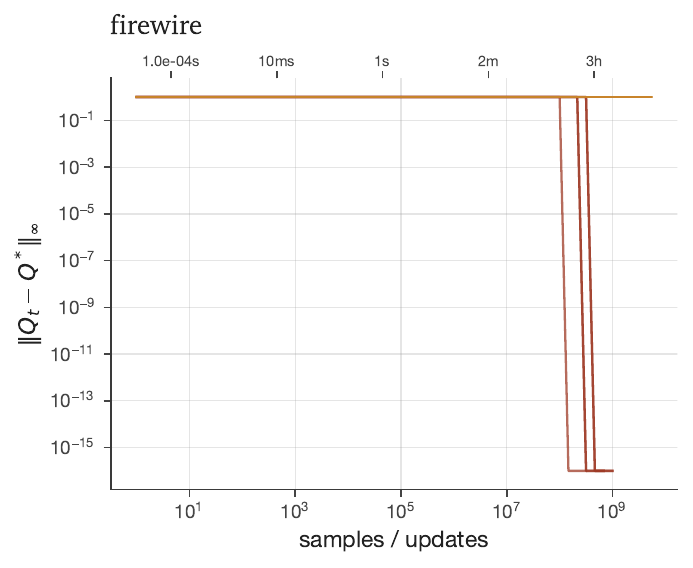}
\end{minipage}\\[2pt]
\begin{minipage}{0.32\textwidth}\centering
  \includegraphics[width=\linewidth]{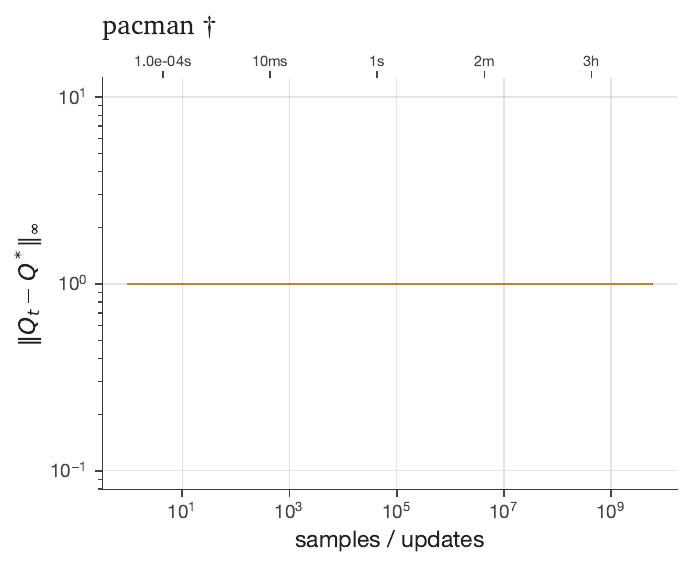}
\end{minipage}\hfill
\begin{minipage}{0.32\textwidth}\centering
  \includegraphics[width=\linewidth]{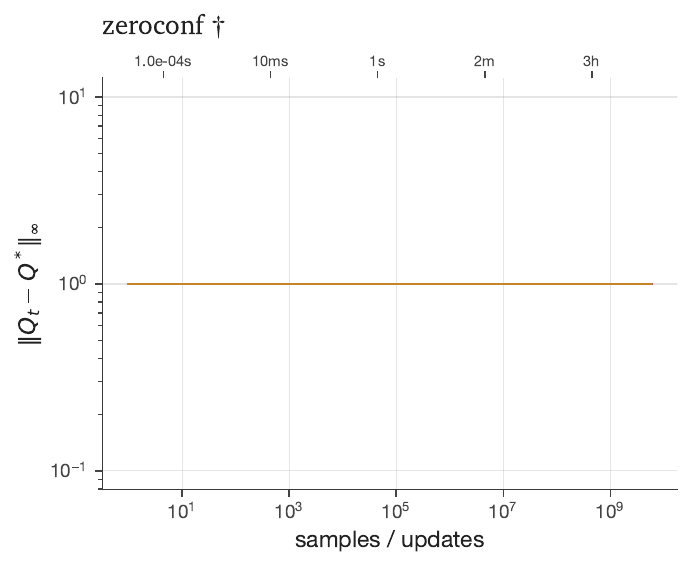}
\end{minipage}\hfill
\begin{minipage}{0.32\textwidth}\centering
  \includegraphics[width=\linewidth]{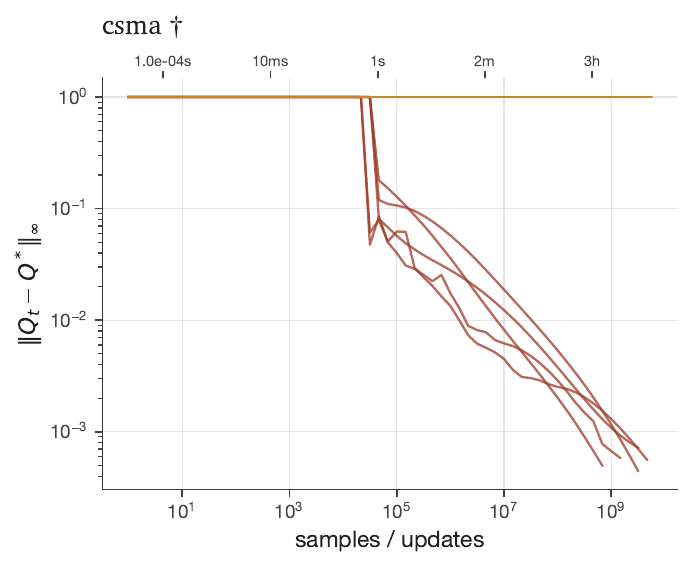}
\end{minipage}
\caption{Per-seed full-table sup-norm $\|Q_t - Q^*\|_\infty$, same runs and
colors as Figure~\ref{fig:perseed-vt}.}
\label{fig:perseed-qerr}
\end{figure*}

\section{Omitted Proofs}
\label{app:proofs}

This sup.\ mat.\ contains the proofs deferred from the main text.

\begin{proof}[Proof of Property~\ref{prop:termination} (almost-sure termination)]
We focus on an initial state $s \in \Snt$, since the hitting time is trivially $0$ for $s \in \Sterm$. Let $n = |\Snt|$. We first claim that for any $s \in \Snt$ and any deterministic policy $\pi$, there is a finite path that reaches a terminal state with positive probability. Suppose not. Then there exist $s \in \Snt$ and $\pi$ such that the probability of reaching $\Sterm$ from $s$ is $0$. The induced Markov chain is then confined to $\Snt$, and, since $\Snt$ is finite, it contains a closed recurrent class $D \subseteq \Snt$, which is strongly connected. The pairs $\{(x, \pi(x)) : x \in D\}$ then form an end component within $\Snt$, contradicting Assumption~\ref{asm:finite_hitting}.

By the pigeonhole principle, the shortest such path from any $s \in \Snt$ has length at most $n$, so after $n$ steps some probability mass reaches $\Sterm$. Let $p > 0$ be the minimum probability of reaching a terminal state from any non-terminal state within $n$ steps. Then
\[
    P^\pi \{ T \geq k \cdot n \} \leq (1-p)^k,
\]
and letting $k \to \infty$ gives $P^\pi \{ T < \infty\} \geq 1 - \lim_{k \to \infty} (1-p)^k = 1$. As this holds for every deterministic $\pi$, and the best policy for avoiding termination is deterministic (finitely many states), $T < \infty$ almost surely under every policy.

Finally, the same geometric tail bounds the expected hitting time uniformly in $\pi$:
\[
    T^\pi_s = \sum_{t=0}^\infty \P^\pi(T > t \mid s_0 = s)
    \le n \sum_{k=0}^\infty (1-p)^k = \tfrac{n}{p},
\]
so $T_s = \sup_\pi T^\pi_s \le n/p < \infty$, while $T_s \ge 1$ since reaching $\Sterm$ from a non-terminal state takes at least one transition.
\end{proof}

\begin{proof}[Proof of Property~\ref{prop:contraction} (contraction)]
We first derive a generic contraction factor for an arbitrary weight $w$, then specialize to $w = T_s$. Since $\H Q^* = Q^*$ and the target terms of~\eqref{def:bellman_operator} cancel,
\[
\begin{aligned}
    &|\H Q(\SApair{}) - Q^*(\SApair{})| \\
    &= \Big|\sum_{s' \in \Snt} \P(s' \mid s, a) \Big(\max_{a' \in \A(s')} Q(s', a') \\
    &\qquad\quad - \max_{a'' \in \A(s')} Q^*(s', a'')\Big)\Big| \\
    &\leq \sum_{s' \in \Snt}\P(s' \mid s, a) \max_{a' \in \A(s')} |Q(s', a') - Q^*(s', a')| \\
    &\leq \sum_{s' \in \Snt}\P(s' \mid s, a) \max_{a' \in \A(s')} w(s', a') \cdot \wnorm{Q - Q^*},
\end{aligned}
\]
using $|\max_{a'} f - \max_{a'} g| \le \max_{a'}|f - g|$ and $|Q - Q^*| \le w\,\wnorm{Q-Q^*}$. Dividing by $w(s,a)$ and maximizing over $\SAnt$ gives $\wnorm{\H Q - Q^*} \le \beta \,\wnorm{Q - Q^*}$ with
\[
    \beta = \maxSAnt \frac{\sum_{s' \in \Snt}\P(s' \mid \SApair{}) \max_{a' \in \A} w(s', a')}{w(s,a)}.
\]

We now take $w(\SApair{}) \coloneqq T_s$, finite by Property~\ref{prop:termination}. Consider the policy that plays action $a$ at $s$ and thereafter maximizes the expected time to termination. Since $T_s$ is the supremum over policies,
\[
    T_s \;\geq\; 1 + \sum_{s' \in \S}\P(s' \mid \SApair{})\, T_{s'},
    \qquad s \in \Snt,
\]
with $T_{s'} = 0$ for $s' \in \Sterm$. Hence for every $a \in \A(s)$,
\begin{equation}\label{eq:Tbellman-ineq}
    \sum_{s' \in \Snt} \P(s' \mid \SApair{})\, T_{s'}
    = \sum_{s' \in \S} \P(s' \mid \SApair{})\, T_{s'}
    \le T_s - 1.
\end{equation}
Since $\max_{a' \in \A(s')} w(s', a') = T_{s'}$, applying~\eqref{eq:Tbellman-ineq},
\[
\begin{aligned}
    \beta
    &= \maxSAnt \frac{\sum_{s' \in \Snt} \P(s' \mid \SApair{})\, T_{s'}}{T_s} \\
    &\le \maxSAnt \frac{T_s - 1}{T_s}
    = 1 - \frac{1}{\max_{s \in \Snt} T_s} \in [0,1),
\end{aligned}
\]
using $1 \le \max_s T_s < \infty$. The degenerate case $\beta = 0$ only strengthens the contraction.
\end{proof}

\begin{proof}[Proof of Lemma~\ref{lem:tsitsiklis}]
Taking the conditional expectation of the square of the recursion, the cross term vanishes since $\E[\err_t \mid \SApair{t}] = 0$:
\[
\begin{aligned}
    \E[W_{t+1}^2 \mid \SApair{t}]
    &= (1-\alpha_t)^2 W_t^2 + \alpha_t^2\, \E[\err_t^2 \mid \SApair{t}] \\
    &\le (1-\alpha_t)^2 W_t^2 + \alpha_t^2 B.
\end{aligned}
\]
Taking total expectations and iterating (via the tower property),
\[
    \E[W_{t+1}^2] \le \prod_{k=0}^t (1-\alpha_k)^2\, \E[W_0^2]
    + B \sum_{k=0}^t \alpha_k^2 \prod_{j=k+1}^t (1-\alpha_j)^2.
\]
As $t \to \infty$ the first term vanishes because $\sum_t \alpha_t = \infty$ forces $\prod_t (1-\alpha_t) \to 0$. For the second term, splitting the sum at $N$ and bounding the tail products by $1$,
\[
    B \sum_{k=0}^\infty \alpha_k^2 \prod_{j>k} (1-\alpha_j)^2
    \le B \sum_{k=N+1}^\infty \alpha_k^2,
\]
which is below any $\zeta > 0$ for $N$ large, since $\sum_t \alpha_t^2 < \infty$. Hence $\lim_{t\to\infty}\E[W_{t+1}^2] = 0$. Separately, $\E[W_{t+1}^2 \mid \SApair{t}] \le W_t^2 + \alpha_t^2 B$ with $\sum_t \alpha_t^2 < \infty$, so by the Robbins--Siegmund theorem $W_t^2$ converges almost surely to a limit $W_\infty^2$. Fatou's lemma then gives $\E[W_\infty^2] \le \liminf_t \E[W_t^2] = 0$, i.e., $W_\infty = 0$ almost surely.
\end{proof}

\begin{proof}[Proof of Theorem~\ref{thm:main}]
By the decomposition~\eqref{eq:sa-update}, subtracting $Q^*$ and writing $\Delta_t(\SApair{}) = Q_t(\SApair{}) - Q^*(\SApair{})$ for the error gives
\begin{equation}\label{eq:error_recursion}
\begin{split}
    \Delta_{t+1}(\SApair{}) ={}& (1 - \alpha_t) \Delta_t(\SApair{}) \\
    &+ \alpha_t [ (\H Q_t)(\SApair{}) - Q^*(\SApair{})] \\
    &+ \alpha_t \err_t(\SApair{}).
\end{split}
\end{equation}
We show $\wnorm{\Delta_t} \to 0$. Fix $\delta \in (0, (1-\beta)/2)$ so that $\beta + 2\delta < 1$ (possible since $\beta < 1$), and set $X_0 = 1$, $X_k = (\beta + 2\delta) X_{k-1}$. We prove by induction on $k$ that there is a random time $t_k$ with $\wnorm{\Delta_t} \le X_k$ for all $t \ge t_k$.

For $t \ge t_k$, define a deterministic process $Y_t$ and a noise process $W_t$ per pair by $Y_{t_k} = X_k$, $W_{t_k} = 0$, and
\[
\begin{aligned}
    Y_{t+1}(\SApair{}) &= (1-\alpha_t)\,Y_t(\SApair{}) + \alpha_t\,\beta X_k, \\
    W_{t+1}(\SApair{}) &= (1-\alpha_t)\,W_t(\SApair{}) + \alpha_t\,\frac{\err_t(\SApair{})}{w(\SApair{})}.
\end{aligned}
\]
Since $|\err_t| \le 1$ and $w = T_s \ge 1$, the normalized noise $\err_t/w$ has zero conditional mean and second moment at most $1$. By Lemma~\ref{lem:tsitsiklis}, applied to each of the finitely many pairs, $\max_{\SAnt} |W_t| \to 0$ almost surely. Also $Z_t \coloneqq Y_t - \beta X_k$ satisfies $Z_{t+1} = (1-\alpha_t) Z_t$, so $Z_t \to 0$ deterministically (as $\sum_t \alpha_t = \infty$), and thus $Y_t \to \beta X_k$.

\emph{Base case ($k=0$).} $|\Delta_t| \le X_0 = 1$ for all $t$, so $t_0 = 0$.

\emph{Inductive step.} Assume $\wnorm{\Delta_t} \le X_k$ for all $t \ge t_k$. We show, by induction on $t \ge t_k$, that
\begin{equation}\label{eq:main_induction_target}
    |\Delta_t(\SApair{}) - w(\SApair{})\,W_t(\SApair{})| \le w(\SApair{})\,Y_t(\SApair{})
\end{equation}
for all $(\SApair{}) \in \SAnt$. Equivalently, $w(W_t - Y_t) \le \Delta_t \le w(W_t + Y_t)$. It holds at $t_k$ since $W_{t_k} = 0$, $Y_{t_k} = X_k$, and $\wnorm{\Delta_{t_k}} \le X_k$. For the step, using~\eqref{eq:error_recursion}, the inductive hypothesis, and the contraction bound $|\H Q_t - Q^*| \le w\,\beta X_k$ (Property~\ref{prop:contraction} with $\wnorm{\Delta_t} \le X_k$),
\[
\begin{aligned}
    \Delta_{t+1}
    &\le (1-\alpha_t)\,w(Y_t + W_t) + \alpha_t w \beta X_k + \alpha_t \err_t \\
    &= w\,Y_{t+1} + w\,W_{t+1},
\end{aligned}
\]
and symmetrically $\Delta_{t+1} \ge -w\,Y_{t+1} + w\,W_{t+1}$, giving~\eqref{eq:main_induction_target} at $t+1$. Dividing by $w$ and maximizing over $\SAnt$,
\[
\begin{aligned}
    \wnorm{\Delta_{t+1}} \le{}& \maxSAnt |Y_{t+1}(\SApair{})| \\
    &+ \maxSAnt |W_{t+1}(\SApair{})|.
\end{aligned}
\]
Since $Y_t \to \beta X_k$ and $W_t \to 0$, there is a time $t_{k+1}$ beyond which $\maxSAnt Y_t \le (\beta + \delta) X_k$ and $\maxSAnt |W_t| \le \delta X_k$, hence $\wnorm{\Delta_t} \le (\beta + 2\delta) X_k = X_{k+1}$.

Thus $t_k$ exists for every $k$, and since $X_k = (\beta+2\delta)^k \to 0$, we conclude $\wnorm{Q_t - Q^*} = \wnorm{\Delta_t} \to 0$ almost surely, so $Q_t(\SApair{}) \to Q^*(\SApair{})$ for every reachable $(\SApair{})$.
\end{proof}

\begin{proof}[Proof of Corollary~\ref{cor:asymptotic}]
Two steps. First, greediness eventually selects only $\Qstar$-optimal actions. Call $a \in \A(s)$ \emph{strictly suboptimal} at $s$ if $\Qstar(\SApair{}) < \Vstar(s)$, and let $\kappa = \min\{\Vstar(s) - \Qstar(\SApair{})\}$ over all such pairs, a minimum over a finite set of positive numbers and hence positive (if there are none, every greedy policy is already $\Qstar$-greedy and this step is vacuous). On the almost sure event of Theorem~\ref{thm:main} there is a time after which $|Q_t - \Qstar| < \kappa/2$ at every pair, and then no strictly suboptimal action can attain $\max_a Q_t(s,a)$, so $\pi_t$ is $\Qstar$-greedy from that time on. Ties among \emph{optimal} actions are harmless, since the argument constrains only which actions are excluded.

Second, every $\Qstar$-greedy policy is optimal: such a $\pi$ satisfies~\eqref{def:vboe} with the maximum realized, so $V^\pi$ and $\Vstar$ agree as fixed points, uniqueness again coming from Assumption~\ref{asm:finite_hitting} via Property~\ref{prop:contraction} specialized to $\pi(s)$. The assumption is not removable: under an end component a $\Qstar$-greedy policy may cycle forever and never reach $\ST$, giving $V^\pi = 0 < \Vstar$. Combining, $\pi_t$ is optimal for all large $t$, so $V^{\pi_t}(\snot) = \Vstar(\snot)$ eventually. 
\end{proof}

\section{Backward-Replay Proof}
\label{app:br}

Section~\ref{sec:algo} introduced \QUASARBR{}, which buffers an episode and applies the identical Line-12 updates in reverse chronological order once the episode ends. This sup.\ mat.\ proves that the guarantee of Theorem~\ref{thm:main} extends to this variant. It is worth being precise about why a separate argument is needed. The noise term of~\eqref{eq:sa-update} is defined against the table the update reads, and the proof of Theorem~\ref{thm:main} rests on that table being determined by the past of the sampled successor, so that $\E[\err_t \mid \SApair{t}] = 0$. Under backward replay this fails: when the $k$-th transition of an episode is replayed, the table already contains the replayed updates of the chronologically \emph{later} transitions, so the target consulted at time $k$ is a function of draws made after $s_{k+1}$. The conditioning that pins down the table also pins down the sample it is supposed to predict, and no choice of filtration restores the martingale property of the raw noise.

Two elementary observations repair the argument. First, every replayed target can be re-anchored at the table as it stood at the episode boundary; the re-anchored noise is a martingale difference with respect to the ordinary sampling filtration, and the anchoring error---the movement of the bootstrap row during the flush---is bounded by the total step size $A_e$ the episode spends. Second, the reversal itself merely reorders a composition of affine maps, and reordering perturbs the composite only at \emph{second} order in the step sizes, while every legitimate term enters at first order. Both prices vanish, since $A_e \to 0$ almost surely, and the ceiling induction of Theorem~\ref{thm:main} then goes through stepped at episode boundaries, with the deterministic envelope inflated by a vanishing amount. We first fix the analyzed object and the notation, then prove the four lemmas the induction consumes, and finally run the induction.

\Theoremspace
\begin{definition}[\QUASARBR{}]\label{def:br}
Algorithm~\ref{alg:qlearning} with the updates deferred to the episode boundary: during an episode the table is not written; each transition is buffered, its pair's counter being incremented and the step size $\alpha_{N(\SApair{})}(\SApair{})$ \emph{recorded at the moment the transition is drawn}; when the episode ends---on reaching $\Sterm$, or truncated after $H \ge |\Snt|$ steps---the buffered Line-12 updates are applied in reverse chronological order, each with its recorded step size. Truncation triggers the flush like any other episode end and is not treated as termination: the final update bootstraps through $\max_{a'} Q(s_{L_e}, a')$ as usual.
\end{definition}

\paragraph{Notation.}
Episodes are indexed by $e$, with length $L_e \le H$, buffered transitions $(\SApair{k}, s_{k+1})$ for $k = 0, \dots, L_e - 1$, and recorded step sizes $\alpha_k$. Let $\bar Q_e$ denote the table read throughout episode $e$ (the table is written only at boundaries), and let $Q^{(k)}$ denote the table after the flush has replayed the transitions down to index $k$, so that $Q^{(L_e)} = \bar Q_e$ and $Q^{(0)} = \bar Q_{e+1}$. Write
\[
    A_e \coloneqq \sum_{k < L_e} \alpha_k \le H,
\]
for the episode's total step-size budget, and $A_e(\SApair{})$ for the part contributed by the visits to the pair $(\SApair{})$. The sampling filtration is $\mathcal{F}_{e,k} \coloneqq \sigma(\mathcal{G}_{e-1},\, s_0, a_0, \dots, s_k, a_k)$, where $\mathcal{G}_{e-1}$ collects everything through the end of episode $e-1$; re-indexing the pairs $(e,k)$ by the global sampling time $t$ recovers the filtration $\{\mathcal{F}_t\}$ of Lemma~\ref{lem:tsitsiklis}, with the off-pair convention $\alpha_t = 0$ of Section~\ref{sec:convergence} unchanged. Finally, $\tau_e$ denotes the sampling time at which the flush of episode $e$ has just been applied.

Adding and subtracting the boundary-anchored target, the replay of transition $k$ reads, exactly,
\begin{equation}\label{eq:br-decomp}
\begin{split}
    Q^{(k)}(\SApair{k}) ={}& (1 - \alpha_k)\, Q^{(k+1)}(\SApair{k}) \\
    &+ \alpha_k \bigl[ (\H \bar Q_e)(\SApair{k}) + \err_k + \xi_k \bigr],
\end{split}
\end{equation}
where
\[
\begin{aligned}
    \err_k &\coloneqq \max_{a' \in \A(s_{k+1})} \bar Q_e(s_{k+1}, a') - (\H \bar Q_e)(\SApair{k}), \\
    \xi_k &\coloneqq \max_{a'} Q^{(k+1)}(s_{k+1}, a') - \max_{a'} \bar Q_e(s_{k+1}, a').
\end{aligned}
\]
For terminal $s_{k+1}$ the maxima read the clamped rows of Line~3, so the expressions apply verbatim and $\xi_k = 0$ there. Nothing has been changed: \eqref{eq:br-decomp} is the algorithm's update, rewritten. The point of the rewriting is where each piece is measurable: $\err_k$ is built from the frozen table and the current transition only, while everything the reversed order touches is confined to $\xi_k$, which will be bounded pathwise and never conditioned on.

\Theoremspace
\begin{lemma}[Vanishing budget]\label{lem:br-budget}
Under Definition~\ref{def:br}, $A_e \to 0$ almost surely.
\end{lemma}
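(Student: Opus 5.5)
The plan is to write $A_e$ as a finite sum of per-pair contributions, $A_e = \sum_{(\SApair{}) \in \SAnt} A_e(\SApair{})$, and show that each of the finitely many summands tends to $0$ almost surely. Since $L_e \le H$, a single episode visits any pair at most $H$ times. If the $j$-th such visit during episode $e$ occurs when the pair's counter reaches $N_e(\SApair{}) + j$, where $N_e(\SApair{})$ is the count before the episode, then
\[
    A_e(\SApair{}) \;\le\; \sum_{j=1}^{H} \alpha_{N_e(\SApair{}) + j}(\SApair{}).
\]
Here I use that step sizes are recorded at draw time and indexed by the running counter, as fixed in Definition~\ref{def:br}.

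First I will show that the individual terms vanish. Assumption~\ref{asm:lr} gives $\sum_n \alpha_n(\SApair{})^2 < \infty$ almost surely, so $\alpha_n(\SApair{}) \to 0$ as $n \to \infty$. Next I will show that every pair's counter diverges. Property~\ref{prop:exploration} gives this for the online algorithm. For \QUASARBR{} the behaviour policy within an episode is $\epsilon$-greedy with respect to the frozen table $\bar Q_e$. That policy still plays every enabled action with probability at least $\epsilon/|\A|$, and episodes still end almost surely: by Property~\ref{prop:termination} they terminate, and in any case they are truncated at $H$. So the same argument, run with restarts at $\snot$ and the reachability convention of Section~\ref{sec:prelims}, yields infinitely many visits to each pair, i.e.\ $N_e(\SApair{}) \to \infty$ almost surely.

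For a pair that is visited in episode $e$, the bound above is then a sum of at most $H$ terms. Each term has index at least $N_e(\SApair{}) + 1$, which tends to infinity, so the sum tends to $0$. A pair not visited in episode $e$ contributes $0$ to $A_e$. Intersecting the finitely many almost-sure events, one per pair, gives $A_e \to 0$ almost surely.

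I expect the main obstacle to be justifying Property~\ref{prop:exploration} for the replay variant, because the behaviour within an episode is driven by a stale table. I would handle this by a Borel--Cantelli (Lévy conditional) argument over episodes. From $\snot$, a fixed shortest path to $(\SApair{})$ of length at most $|\Snt|$ is followed with conditional probability bounded below by a constant $c > 0$ that depends only on $\epsilon$, $|\A|$ and $\P$, uniformly over the table. This requires $H \ge |\Snt|$, which Definition~\ref{def:br} guarantees. Summing these conditional probabilities over the infinitely many episodes diverges, so the conditional Borel--Cantelli lemma gives infinitely many visits almost surely.
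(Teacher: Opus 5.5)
Your proof is correct and follows the paper's argument: each episode applies at most $H$ step sizes, each drawn at a visit count that diverges by exploration, and $\alpha_n \to 0$ by square-summability. The paper compresses this into the single bound $A_e \le H \sup_{n > M_e}\alpha_n$, with $M_e$ the least count over pairs, and simply asserts that Property~\ref{prop:exploration} carries over to the frozen table; your per-pair decomposition and conditional Borel--Cantelli argument (using $H \ge |\Snt|$ and the table-independent action floor $\epsilon/|\A|$) spell that step out explicitly.
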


\begin{proof}
By Property~\ref{prop:exploration}---which applies verbatim here, its exploration floor being indifferent to which table the $\epsilon$-greedy policy reads and to the truncation---every pair's visit count tends to infinity, hence so does $M_e$, the least count over $\SAnt$ entering episode $e$. Assumption~\ref{asm:lr} forces $\alpha_n \to 0$, so $s(M) \coloneqq \sup_{n > M} \alpha_n \to 0$. Every step applied in episode $e$ is drawn at a count exceeding $M_e$, so $A_e \le H\, s(M_e) \to 0$.
\end{proof}

\Theoremspace
\begin{lemma}[Frozen noise]\label{lem:br-noise}
$\err_k$ is the noise of~\eqref{eq:sa-update} with $\bar Q_e$ in place of $Q_t$, and it retains its properties: $\E[\err_k \mid \mathcal{F}_{e,k}] = 0$, $\E[\err_k^2 \mid \mathcal{F}_{e,k}] \le 1$, the recorded step sizes are $\mathcal{F}_{e,k}$-measurable, and Assumption~\ref{asm:lr} holds in global time. In particular Lemma~\ref{lem:tsitsiklis} applies to any process driven by $\alpha_t \err_t$.
\end{lemma}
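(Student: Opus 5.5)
The plan is to verify each listed property directly from the measurability structure of the frozen table and the sampling filtration. First, $\bar Q_e$ is a function of everything up to the end of episode $e-1$ (it is the table after the flush of episode $e-1$), so it is $\mathcal{G}_{e-1}$-measurable and hence $\mathcal{F}_{e,k}$-measurable for every $k$. In particular $(\H \bar Q_e)(\SApair{k})$ is $\mathcal{F}_{e,k}$-measurable, since $\SApair{k}$ is included in $\mathcal{F}_{e,k}$. The only new randomness in $\err_k$ is $s_{k+1}$.

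Second, by the simulator semantics, conditional on $\mathcal{F}_{e,k}$ the successor $s_{k+1}$ is distributed as $\P(\cdot \mid \SApair{k})$. This holds because the draw is fresh and independent of the past given the current pair, and the buffering does not alter the sampling, since the $\epsilon$-greedy choice of $a_k$ reads $\bar Q_e$ and is already in $\mathcal{F}_{e,k}$. Hence
\[
\E\Bigl[\max_{a'} \bar Q_e(s_{k+1},a') \,\Big|\, \mathcal{F}_{e,k}\Bigr] = \sum_{s'} \P(s' \mid \SApair{k}) \max_{a'} \bar Q_e(s',a') = (\H \bar Q_e)(\SApair{k}).
\]
Here the terminal rows are clamped at $1$ on $\ST$ and $0$ on $\SN$, matching the two sums in~\eqref{def:bellman_operator}. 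This gives $\E[\err_k \mid \mathcal{F}_{e,k}]=0$. For the second moment, every entry of $\bar Q_e$ lies in $[0,1]$ by the convex-combination argument of Section~\ref{sec:convergence}, applied to the replayed updates, which are also convex combinations of values in $[0,1]$. So both terms of $\err_k$ lie in $[0,1]$, giving $|\err_k|\le 1$ and hence $\E[\err_k^2\mid\mathcal{F}_{e,k}]\le 1$.

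Third, the recorded step size $\alpha_k=\alpha_{N(\SApair{k})}(\SApair{k})$ is fixed at the moment of drawing. It depends only on the visit count, which is a function of the visited pairs up to and including $(\SApair{k})$, so it is $\mathcal{F}_{e,k}$-measurable. Each visit increments the counter exactly once, so re-indexing by global time with the off-pair convention $\alpha_t=0$ makes the applied sequence at each pair exactly the schedule $\{\alpha_n(\SApair{})\}_n$ traversed in order. Thus $\sum_t\alpha_t(\SApair{})=\sum_n\alpha_n(\SApair{})=\infty$ and the squares are summable by Assumption~\ref{asm:lr}. The divergence also needs every pair to be visited infinitely often, which is Property~\ref{prop:exploration} exactly as invoked in Lemma~\ref{lem:br-budget}. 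The hypotheses of Lemma~\ref{lem:tsitsiklis} therefore hold for $W_{t+1}=(1-\alpha_t)W_t+\alpha_t\err_t$ with $B=1$.

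The main obstacle is the subtle point that $\err_k$ must not be confused with the replayed noise. That noise would involve $Q^{(k+1)}$, which depends on $s_{k+2},\dots$ and is not adapted. The frozen anchoring avoids this, but I must check that no quantity in $\err_k$ silently depends on future draws. The only risk is the episode length $L_e$. I would handle it by noting that $\err_k$ is defined whenever transition $k$ is drawn, and that the event $\{k<L_e\}$ is $\mathcal{F}_{e,k}$-measurable: whether the episode continues to step $k$ is decided by $s_k\notin\Sterm$ and $k<H$. Conditioning on this event is therefore legitimate.
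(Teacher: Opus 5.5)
Your proposal is correct and follows essentially the same route as the paper. In both, the anchor $\bar Q_e$ is $\mathcal{G}_{e-1}$-measurable, so conditioning on $\mathcal{F}_{e,k}$ averages the sampled maximum over the true kernel to $(\H \bar Q_e)(s_k, a_k)$ with both terms in $[0,1]$; step-size measurability comes from the recording convention; and counting visits identifies the global-time sums with the schedule sums. Your extra details, namely invoking Property~\ref{prop:exploration} for divergence of the sums and checking that $\{k < L_e\}$ is $\mathcal{F}_{e,k}$-measurable, make explicit what the paper leaves implicit rather than giving a different argument.
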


\begin{proof}
The argument of Section~\ref{sec:convergence} goes through verbatim with $\bar Q_e$ in the role of $Q_t$: the anchor is $\mathcal{G}_{e-1}$-measurable, so given $\mathcal{F}_{e,k}$ the sampled maximum averages over the true kernel to $(\H \bar Q_e)(\SApair{k})$, and both terms lie in $[0,1]$. Restoring this argument is the entire point of the re-anchoring; it is the mid-flush table that is measurable with respect to no admissible past. Step-size measurability is the recording convention of Definition~\ref{def:br}, and the multiset of (pair, count) assignments coincides with the online algorithm's, so the discussion following Assumption~\ref{asm:lr} applies unchanged.
\end{proof}

\Theoremspace
\begin{lemma}[Drift]\label{lem:br-drift}
Within the flush of episode $e$, every intermediate table satisfies $\|Q^{(j)} - \bar Q_e\|_\infty \le A_e$. Consequently $|\xi_k| \le A_e$ for every $k$, and $\|\bar Q_{e+1} - \bar Q_e\|_\infty \le A_e$.
\end{lemma}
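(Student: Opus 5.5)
The plan is a backward induction along the flush, tracking the cumulative step size spent so far. Define the partial budget $B_j \coloneqq \sum_{j \le k < L_e} \alpha_k$, the step size already consumed when the table $Q^{(j)}$ is produced (replay runs from index $L_e - 1$ down to $0$). I will prove by downward induction on $j$ that
\[
    \|Q^{(j)} - \bar Q_e\|_\infty \le B_j .
\]
The base case $j = L_e$ is trivial, since $Q^{(L_e)} = \bar Q_e$ and $B_{L_e} = 0$. Because $B_j \le B_0 = A_e$, the first claim then follows for every intermediate table.

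For the inductive step, replaying transition $k$ changes only the single entry $(\SApair{k})$. All other entries of $Q^{(k)}$ equal those of $Q^{(k+1)}$ and are covered by the hypothesis, since $B_{k+1} \le B_k$. At the updated entry, Line~12 gives
\[
    Q^{(k)}(\SApair{k}) - Q^{(k+1)}(\SApair{k}) = \alpha_k \bigl[\max_{a'} Q^{(k+1)}(s_{k+1},a') - Q^{(k+1)}(\SApair{k})\bigr].
\]
Every table entry lies in $[0,1]$, because each update is a convex combination of values in $[0,1]$ and the terminal rows are clamped. Hence the bracket has absolute value at most $1$, and the entry moves by at most $\alpha_k$. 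The triangle inequality with the hypothesis then yields $|Q^{(k)}(\SApair{k}) - \bar Q_e(\SApair{k})| \le B_{k+1} + \alpha_k = B_k$.

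This crude bound is all the lemma needs, since it charges each replayed step its full size $\alpha_k$. One could try to exploit the contraction toward the target, but that is unnecessary here.

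The consequences follow directly. For $\xi_k$, I use $|\max_{a'} f - \max_{a'} g| \le \max_{a'} |f - g|$, applied to row $s_{k+1}$ of $Q^{(k+1)}$ and of $\bar Q_e$. This gives $|\xi_k| \le \|Q^{(k+1)} - \bar Q_e\|_\infty \le B_{k+1} \le A_e$. For terminal $s_{k+1}$ the rows are clamped and $\xi_k = 0$. Finally, $\bar Q_{e+1} = Q^{(0)}$, so the claim at $j = 0$ gives $\|\bar Q_{e+1} - \bar Q_e\|_\infty \le A_e$.

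The main point needing care is the uniform bound $[0,1]$ on every mid-flush table. Under backward replay the reads are not of $\bar Q_e$, so the invariant has to be checked along the flush. The same convex-combination argument handles this, because the step sizes satisfy $\alpha_k \in (0,1]$ by Definition~\ref{def:br} and the successor row also lies in $[0,1]$. The argument is pathwise, so nothing in it requires conditioning.
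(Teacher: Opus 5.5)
Your proof is correct and follows the paper's argument: each replayed update moves a single entry by at most $\alpha_k$ because every entry and every target lies in $[0,1]$. Telescoping these moves (your downward induction on the partial budgets $B_j$) gives the $A_e$ bound, and non-expansiveness of the maximum transfers it to $\xi_k$; your bookkeeping even gives the slightly sharper $|\xi_k| \le B_{k+1}$, which the lemma does not need.
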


\begin{proof}
Each replayed update changes exactly one entry, by $\alpha_k\,|\text{target} - \text{current}| \le \alpha_k$, since every table entry and every target lies in $[0,1]$; telescoping over the replayed transitions bounds each intermediate deviation by $A_e$. The maximum is non-expansive, $|\max_{a'} f - \max_{a'} g| \le \max_{a'} |f - g|$, so $\xi_k$ inherits the bound.
\end{proof}

\Theoremspace
\begin{lemma}[Affine reordering]\label{lem:br-reorder}
Let $\varphi_i(v) = (1-\alpha_i)v + \alpha_i g_i$ for $i = 1, \dots, m$, with $\alpha_i \in [0,1]$, and let $A' \coloneqq \sum_i \alpha_i$. Applied in any order, the composite evaluates at $v_0$ to
\[
    \Pi\, v_0 + \sum_{i=1}^m c_i\, g_i,
    \qquad
    \Pi \coloneqq \prod_{j=1}^m (1 - \alpha_j),
\]
where $c_i = \alpha_i \prod_j (1-\alpha_j)$, the product over the maps applied \emph{after} $\varphi_i$. Moreover $\alpha_i (1 - A') \le c_i \le \alpha_i$ for every order; hence the coefficients of two orders differ by $|c_i - c_i'| \le \alpha_i A'$.
\end{lemma}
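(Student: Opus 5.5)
We prove the composite formula by induction on the number of maps applied, for an arbitrary order. Fix a permutation $\sigma$ of $\{1,\dots,m\}$ and apply the maps in the order $\varphi_{\sigma(1)}, \varphi_{\sigma(2)}, \dots, \varphi_{\sigma(m)}$. The inductive claim is that after the first $r$ maps the value is
\[
\prod_{j \le r}(1-\alpha_{\sigma(j)})\, v_0 + \sum_{i \le r} \alpha_{\sigma(i)} \prod_{i < j \le r}(1-\alpha_{\sigma(j)})\, g_{\sigma(i)} .
\]
The case $r=0$ is trivial. For the step, applying $\varphi_{\sigma(r+1)}$ multiplies every existing coefficient by $(1-\alpha_{\sigma(r+1)})$ and adds $\alpha_{\sigma(r+1)} g_{\sigma(r+1)}$ with empty product $1$. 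At $r=m$ this gives the stated form. The coefficient of $v_0$ is the full product $\Pi$ regardless of order, because scalar multiplication commutes. Each $c_i$ is $\alpha_i$ times the product of $(1-\alpha_j)$ over exactly the maps applied after $\varphi_i$.

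For the bounds, the upper bound $c_i \le \alpha_i$ holds because each factor $1-\alpha_j$ lies in $[0,1]$. For the lower bound, use the elementary inequality $\prod_{j \in J}(1-x_j) \ge 1 - \sum_{j \in J} x_j$ for $x_j \in [0,1]$. It follows by induction on $|J|$: writing $P$ for the product over $J$, we get $P(1-x) \ge P - x \ge 1-\sum_{j\in J} x_j - x$, using $P \le 1$. Applying this to the set of maps after $\varphi_i$ gives
\[
\prod (1-\alpha_j) \ge 1 - \sum_{j \ne i} \alpha_j \ge 1 - A',
\]
so $c_i \ge \alpha_i(1-A')$. When $A' > 1$ the lower bound is negative, and it holds trivially since $c_i \ge 0$.

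For two orders with coefficients $c_i$ and $c_i'$, both lie in the interval $[\alpha_i(1-A'), \alpha_i]$, which has length $\alpha_i A'$. Hence $|c_i - c_i'| \le \alpha_i A'$.

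There is no real obstacle here; the only point needing care is the product lower bound, which the short induction above handles. The statement should also be read with the indices $i$ allowed to repeat the same table entry, as happens when a pair is visited twice in an episode. The argument is purely scalar, so repeated visits to the same entry are simply distinct maps $\varphi_i$.
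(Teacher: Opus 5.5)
Your proof is correct and follows essentially the same route as the paper's: induction on the composition gives the coefficient formula, and the product inequality $\prod(1-\alpha_j)\ge 1-\sum\alpha_j$, applied to the maps after $\varphi_i$, places every $c_i$ in $[\alpha_i(1-A'),\alpha_i]$, an interval of length $\alpha_i A'$. The only addition is that you prove that product inequality by a short induction, whereas the paper uses it without proof.
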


\begin{proof}
Induction on $m$: slopes multiply, and each injection $\alpha_i g_i$ is scaled by the slopes of the maps applied after it. For the bounds, $c_i / \alpha_i$ is a product of factors $1 - \alpha_j$ over a subset of $\{j \ne i\}$, and $\prod_j (1-\alpha_j) \ge 1 - \sum_j \alpha_j$; two numbers in $[1 - A', 1]$ differ by at most $A'$.
\end{proof}

\Theoremspace
\begin{lemma}[Vanishing input]\label{lem:br-vanish}
Let $Z_{t+1} = (1-a_t)Z_t + a_t c_t$ pathwise, with $a_t \in [0,1]$, $\sum_t a_t = \infty$, $Z_0 \ge 0$, $c_t \ge 0$, and $c_t \to 0$. Then $Z_t \to 0$.
\end{lemma}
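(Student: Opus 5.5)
The plan is the standard $\varepsilon$-argument for linear recursions with vanishing input: split the trajectory at a late time, after which the input is small, and use $\sum_t a_t = \infty$ to wash out everything inherited from before that time.

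The first step is an a priori bound. Since $c_t \to 0$, the sequence $(c_t)$ is bounded, say by $C$. The recursion is a convex combination of $Z_t$ and $c_t$ because $a_t \in [0,1]$. An induction therefore gives $0 \le Z_t \le \max(Z_0, C) \eqqcolon M$ for all $t$. In particular $Z_t \ge 0$, so it suffices to show $\limsup_t Z_t = 0$.

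The second step handles the tail. Fix $\varepsilon > 0$ and choose $T$ such that $c_t \le \varepsilon$ for all $t \ge T$. Exactly as for $Z_t \coloneqq Y_t - \beta X_k$ in the proof of Theorem~\ref{thm:main}, define for $t \ge T$
\[
    U_t \coloneqq Z_t - \varepsilon .
\]
Then
\[
    U_{t+1} = (1-a_t)U_t + a_t (c_t - \varepsilon) \le (1-a_t) U_t .
\]
On the event $U_t \ge 0$ this gives $U_{t+1} \le (1-a_t)U_t$. On the event $U_t < 0$ it gives $U_{t+1} \le (1-a_t) U_t \le 0$, so $U$ stays nonpositive once it becomes nonpositive. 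Combining the two cases by induction,
\[
    \max(U_t, 0) \le \prod_{j=T}^{t-1}(1-a_j)\, \max(U_T, 0) \le M \prod_{j=T}^{t-1}(1-a_j).
\]

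The third step shows this product vanishes. Using $1 - x \le e^{-x}$,
\[
    \prod_{j=T}^{t-1}(1-a_j) \le \exp\Bigl(-\sum_{j=T}^{t-1} a_j\Bigr) \to 0 ,
\]
since the tail sum $\sum_{j \ge T} a_j$ still diverges. Hence $\limsup_t Z_t \le \varepsilon$. Since $\varepsilon > 0$ was arbitrary and $Z_t \ge 0$, we conclude $Z_t \to 0$.

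No step is a genuine obstacle. The only point needing care is the sign bookkeeping in the second step: the argument relies on the recursion preserving the inequality $Z_t \le \varepsilon$ once it holds, and on geometric decay of the excess otherwise. Everything is pathwise and deterministic, so no probabilistic tools are needed.
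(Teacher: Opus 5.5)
Your proof is correct and follows the paper's argument essentially step for step: fix a threshold $\varepsilon$ and choose $T$ with $c_t \le \varepsilon$ for $t \ge T$; then the positive part $(Z_t - \varepsilon)^+$ contracts by the factor $(1-a_t)$, and $\sum_t a_t = \infty$ drives the product to zero, giving $\limsup_t Z_t \le \varepsilon$. Your a priori bound $Z_t \le M$ is harmless but not needed, since the paper only uses that $(Z_T - \rho)^+$ is finite for the fixed $T$; your $1 - x \le e^{-x}$ estimate spells out a step the paper leaves implicit.
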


\begin{proof}
Given $\rho > 0$, pick $T$ with $c_t \le \rho$ for $t \ge T$. Then $(Z_{t+1} - \rho)^+ \le (1-a_t)(Z_t - \rho)^+$, so $(Z_t - \rho)^+ \le \prod_{r=T}^{t-1}(1-a_r)\,(Z_T - \rho)^+ \to 0$, since $\sum_t a_t = \infty$. Hence $\limsup_t Z_t \le \rho$; let $\rho \downarrow 0$ and use $Z_t \ge 0$.
\end{proof}

\Theoremspace
\begin{theorem}\label{thm:br}
Under Assumptions~\ref{asm:finite_hitting} and~\ref{asm:lr}, the \QUASARBR{} iterates of Definition~\ref{def:br} satisfy $Q_t(\SApair{}) \to Q^*(\SApair{})$ almost surely for every $(\SApair{}) \in \SAnt$.
\end{theorem}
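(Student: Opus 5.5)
We will rerun the ceiling induction of Theorem~\ref{thm:main}, but observe the error only at episode boundaries $\tau_e$. Inside a flush the replayed recursion~\eqref{eq:br-decomp} will be treated as one exact step of $\H$ applied to the frozen table $\bar Q_e$, plus the martingale noise $\err_k$, plus a pathwise perturbation $\xi_k$.

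First we fix a pair $(\SApair{})$ and collapse its updates within episode $e$. The updates to that entry during the flush are affine maps $\varphi_i(v) = (1-\alpha_i)v + \alpha_i g_i$ with $g_i = (\H\bar Q_e)(\SApair{}) + \err_{k_i} + \xi_{k_i}$. Updates to other entries only change the $g_i$, through $\xi$, and never the slope. By Lemma~\ref{lem:br-reorder} the entry at $\tau_{e+1}$ equals $\Pi\,\bar Q_e(\SApair{}) + \sum_i c_i g_i$. Here $c_i$ differs from the coefficient $\tilde c_i$ of a reference chronological-order composition by at most $\alpha_i A_e$, with $\tilde c_i$ determined by recorded step sizes only.

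Next we compare with an ``ideal'' online recursion in global time,
\[
\tilde Q_{t+1} = (1-\alpha_t)\tilde Q_t + \alpha_t\bigl[(\H\bar Q_e) + \err_t\bigr].
\]
The discrepancy between the actual boundary value and this ideal one is bounded by two terms. The first is $\sum_i \alpha_i|\xi_{k_i}| \le A_e(\SApair{})A_e$ by Lemma~\ref{lem:br-drift}. The second is $\sum_i|c_i-\tilde c_i|\,|g_i| \le A_e(\SApair{})\,(3A_e)$, since $|g_i| \le 1 + 1 + A_e$ is bounded. Both are of the form $A_e(\SApair{})\cdot O(A_e)$, that is, the step-size mass spent on the pair times a vanishing factor $\epsilon_e \coloneqq C A_e$ with $\epsilon_e \to 0$ almost surely by Lemma~\ref{lem:br-budget}.

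The induction then proceeds as follows. Suppose $\wnorm{\bar Q_e - Q^*} \le X_k$ for all $e \ge e_k$. Within each episode $\bar Q_e$ is fixed, so the contraction of Property~\ref{prop:contraction} gives $|\H\bar Q_e - Q^*| \le w\beta X_k$. We define the envelope processes
\[
Y_{t+1} = (1-\alpha_t)Y_t + \alpha_t(\beta X_k + \epsilon_{e(t)}),
\qquad
W_{t+1} = (1-\alpha_t)W_t + \alpha_t \err_t/w,
\]
in global time. Lemma~\ref{lem:br-noise} lets Lemma~\ref{lem:tsitsiklis} drive $W_t \to 0$. Lemma~\ref{lem:br-vanish}, applied to $Y_t - \beta X_k$ with input $\epsilon_{e(t)} \to 0$, gives $Y_t \to \beta X_k$. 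The sandwich $w(W-Y) \le \Delta \le w(W+Y)$ is then proved at boundaries by induction on $e$. The slack absorbing the reordering and drift errors is exactly the $\epsilon_e$ term. One caveat is that $W$ is evaluated with chronological weights while the true order uses $c_i$; that coefficient mismatch times $|\err| \le 1$ is already counted in the $O(A_e)$ bound above.

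Convergence at boundaries then lifts to all $t$ via Lemma~\ref{lem:br-drift}, since $\|Q^{(j)} - \bar Q_e\|_\infty \le A_e \to 0$. The main obstacle is the bookkeeping in the second step. We must verify that the reordering and drift errors really scale as step-size mass times a vanishing quantity, rather than just as $O(A_e)$ per episode. A bare $O(A_e)$ per episode could sum to infinity and would not fit Lemma~\ref{lem:br-vanish}'s form $a_t c_t$. The intended fix is to attribute each error to the individual $\alpha_i$ through the per-step bound $|c_i - \tilde c_i| \le \alpha_i A_e$.
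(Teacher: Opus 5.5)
Your proposal is correct and follows essentially the paper's own route. That route has four parts: a ceiling induction stepped at episode boundaries with noise anchored at the frozen table $\bar Q_e$; a per-pair collapse of the flush via Lemma~\ref{lem:br-reorder}; reordering and drift errors charged as $A_e(x)\cdot O(A_e)$; and an envelope $Y$ inflated by a multiple of $A_e$ whose limit is handled by Lemma~\ref{lem:br-vanish}.

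One detail should be made explicit. The slack $w\,\epsilon_e\sum_i \tilde c_i \ge w\,\epsilon_e\,A'(1-A')$ dominates the $O(A' A_e)$ error only once $A_e \le \tfrac12$ (with $A' = A_e(x)$). The paper secures this by starting every $e_k$ after the last episode with $A_e > \tfrac12$ and taking $C = 6$.
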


Corollary~\ref{cor:asymptotic} holds verbatim with Theorem~\ref{thm:br} in place of Theorem~\ref{thm:main}, since its proof consumes value convergence as a black box.

\begin{proof}
The table is written only at episode boundaries, so the table read at every sampling time is $\bar Q_{e(t)}$, and it suffices to control the boundary errors $\Delta_e \coloneqq \bar Q_e - Q^*$; mid-flush tables are then within $A_e \to 0$ of $\bar Q_e$ in the supremum norm by Lemmas~\ref{lem:br-drift} and~\ref{lem:br-budget}. As in the proof of Theorem~\ref{thm:main}, fix $\delta \in (0, (1-\beta)/2)$ and set $X_0 = 1$, $X_k = (\beta + 2\delta)X_{k-1}$. We prove by induction on $k$ that there is a random episode index $e_k$ with $\wnorm{\Delta_e} \le X_k$ for all $e \ge e_k$; since $X_k \to 0$, this proves the theorem. Every $e_k$ is taken beyond the last episode containing a first visit (there are at most $|\SAnt|$ such episodes, since each consumes at least one first visit) and beyond the last episode with $A_e > \tfrac12$ (almost surely finite, by Lemma~\ref{lem:br-budget}); the base case $k = 0$ holds from that point on since every table lies in $[0,1]^{\Snt \times \A}$ and $w \ge 1$.

For the inductive step, assume $\wnorm{\Delta_e} \le X_k$ for all $e \ge e_k$. Define per pair, on the global sampling clock from $\tau_{e_k}$, with $W_{\tau_{e_k}} = 0$, $Y_{\tau_{e_k}} = X_k$, and $c_e \coloneqq 6 A_e$:
\[
\begin{aligned}
    W_{t+1}(\SApair{}) &= (1-\alpha_t)\,W_t(\SApair{}) + \alpha_t\,\frac{\err_t(\SApair{})}{w(\SApair{})}, \\
    Y_{t+1}(\SApair{}) &= (1-\alpha_t)\,Y_t(\SApair{}) + \alpha_t\,\bigl(\beta X_k + c_{e(t)}\bigr).
\end{aligned}
\]
By Lemmas~\ref{lem:br-noise} and~\ref{lem:tsitsiklis}, $\maxSAnt |W_t| \to 0$ almost surely. The envelope's target is inflated by $c_e$, so $Z_t \coloneqq Y_t - \beta X_k$ satisfies $Z_{t+1} = (1-\alpha_t)Z_t + \alpha_t c_{e(t)}$, no longer the exact geometric decay of Theorem~\ref{thm:main}; since $c_e \to 0$ (Lemma~\ref{lem:br-budget}), Lemma~\ref{lem:br-vanish} still gives $Y_t \to \beta X_k$.

We show, by induction on episode boundaries $e \ge e_k$, the sandwich
\begin{equation}\label{eq:br-sandwich}
    |\Delta_e(\SApair{}) - w(\SApair{})\,W_{\tau_e}(\SApair{})|
    \le w(\SApair{})\,Y_{\tau_e}(\SApair{})
\end{equation}
at every pair. It holds at $e_k$ since $W = 0$, $Y = X_k$, and $\wnorm{\Delta_{e_k}} \le X_k$. For the step $e \to e+1$, fix a pair $x$ visited $m \ge 1$ times in episode $e$ (unvisited pairs are idle on both sides), with recorded steps $\alpha_1, \dots, \alpha_m$ in sampling order and $A' \coloneqq \sum_i \alpha_i = A_e(x) \le A_e$. Each update writes only its own pair's entry, so by~\eqref{eq:br-decomp} the flush acts on the entry at $x$ by composing $x$'s own maps, with targets $b + \err_i + \xi_i$ where $b \coloneqq (\H \bar Q_e - Q^*)(x)$, in reverse order; everything the other pairs' updates do enters only through the $\xi_i$, already bounded. Unrolling the flush and the $W, Y$ recursions across the episode (Lemma~\ref{lem:br-reorder}, with the two coefficient systems identified above),
\[
\begin{aligned}
    \Delta_{e+1}(x) ={}& \Pi\,\Delta_e(x)
    + \sum_i \alpha_i P_i^{<} \bigl[ b + \err_i + \xi_i \bigr], \\
    w(W{+}Y)_{\tau_{e+1}}(x) ={}& \Pi\, w(W{+}Y)_{\tau_e}(x) \\
    &+ \sum_i \alpha_i P_i^{>} \bigl[ w(\beta X_k + c_e) + \err_i \bigr].
\end{aligned}
\]
Subtracting, the inherited terms compare by the sandwich at $e$ (their common coefficient $\Pi \ge 0$ is order-invariant), and $|b| \le w \beta X_k$ by Property~\ref{prop:contraction} with the outer hypothesis at episode $e$---legitimate precisely because the anchor $\bar Q_e$ is a \emph{boundary} table, covered by that hypothesis. Hence
\[
\begin{split}
    &\bigl[w(W{+}Y) - \Delta\bigr]_{e+1}(x) \\
    &\;\;\ge\; \sum_i \alpha_i \bigl(P_i^{>} - P_i^{<}\bigr)\bigl(w \beta X_k + \err_i\bigr) \\
    &\;\;\quad + w\, c_e \sum_i \alpha_i P_i^{>} \;-\; \sum_i \alpha_i P_i^{<}\, \xi_i \\
    &\;\;\ge\; -2 w A'^2  + 3 w A' A_e - w A' A_e \\
    &\;\;=\; 2 w A' (A_e - A') \;\ge\; 0,
\end{split}
\]
using, in order: $|P_i^{>} - P_i^{<}| \le A'$ (Lemma~\ref{lem:br-reorder}) with $|w \beta X_k + \err_i| \le 2w$ ($w \geq 1 \geq |n_i|$); the lower bound $\sum_i \alpha_i P_i^{>} \ge A'(1 - A') \ge A'/2$ together with $c_e = 6A_e$; and $|\xi_i| \le A_e$ (Lemma~\ref{lem:br-drift}). The symmetric computation bounds $[\Delta - w(W{-}Y)]_{e+1}(x)$ below by the same quantity, giving~\eqref{eq:br-sandwich} at $e+1$. 

Dividing by $w$ and maximizing over $\SAnt$, $\wnorm{\Delta_e} \le \maxSAnt Y_{\tau_e} + \maxSAnt |W_{\tau_e}|$; since $Y_t \to \beta X_k$ and $W_t \to 0$, there is an episode index $e_{k+1}$ beyond which $\wnorm{\Delta_e} \le (\beta + \delta)X_k + \delta X_k = X_{k+1}$. The induction advances, and $X_k \to 0$ completes the proof.
\end{proof}

Backward replay is moreover exact after a single episode on the instances whose optimum is $1$, which accounts for its single-episode convergence there.
\Theoremspace
\begin{theorem}[One-pass exactness]\label{thm:onepass}
Suppose $\Qstar(\SApair{}) = 1$ for every pair reachable from $\snot$, equivalently every reachable terminal is a target. Then \QUASARBR{} with first-visit step size $\alpha = 1$ sets $Q(\SApair{}) = \Qstar(\SApair{})$ on every pair of the first target-reaching trajectory after a single backward sweep, whatever the length of that trajectory.
\end{theorem}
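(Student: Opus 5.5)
The plan is to follow a single backward sweep along the trajectory and show by a downward induction that the value $1$ injected at the target is copied, one step at a time, all the way back to $s_0$. Since every entry of the table and every target lies in $[0,1]$ and $\Qstar \equiv 1$ on reachable pairs, it suffices to show that each pair on the trajectory ends the sweep with value exactly $1$.

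Fix the first target-reaching episode, with buffered transitions $(s_k,a_k,s_{k+1})$ for $k=0,\dots,L-1$ and $s_L \in \ST$. Three structural observations come first.

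\begin{enumerate}
\item \emph{The table is still zero when this episode starts.} Every earlier episode was truncated, since every reachable terminal is a target. In a truncated episode every bootstrap reads a non-terminal row, and all such rows are initially $0$. By induction over the reverse replay, every target in such an episode is $0$, so the table stays identically $0$ on $\SAnt$ before the episode.
\item \emph{Repeated pairs are harmless.} Within the episode a pair may occur several times. Under reverse replay, its chronologically earliest occurrence is written \emph{last}. Each pair's last write therefore uses the step size of its earliest in-episode occurrence.
\item \emph{Which occurrences have $\alpha=1$.} The hypothesis ``first-visit step size $\alpha=1$'' makes that earliest occurrence an overwrite, provided the pair was not already visited in a truncated episode. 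I would restrict to that case, which holds e.g.\ when the first episode reaches the target, and remark separately on how truncated prefixes consume first visits.
\end{enumerate}

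Under this reading, the final value of each pair $(s_k,a_k)$ equals $\max_{a'} Q(s_{k+1},a')$, read at the moment its earliest occurrence is replayed.

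The core step is an invariant proved by downward induction on the replay index $k$. The invariant is: at the moment index $k$ is replayed, the row of $s_{k+1}$ contains an entry equal to $1$. For $k=L-1$ this holds because the row is the clamped target row of Line~3. For general $k$, I would pass to the loop-erased path. Let $j$ be the largest index with $s_j=s_{k+1}$, and consider the pair $(s_j,a_j)$; its successor $s_{j+1}$ first appears later along the loop-erased path. By induction on the position in the loop-erased path, starting from the target end, the row of $s_{j+1}$ already holds a $1$ when $(s_j,a_j)$ is overwritten with $\alpha=1$. After that overwrite the entry stays at $1$. Any subsequent write to it has a target that is a maximum of a row already containing $1$, and a convex combination of $1$ with $1$ is $1$.

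This yields the invariant, hence each earliest occurrence is overwritten to $1$. Since that is the last write to its pair, every pair on the trajectory ends at $1=\Qstar$. The argument never uses the length $L$, which gives ``whatever the length of that trajectory''.

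The main obstacle I expect is the cycle case: showing that a pair whose row has reached $1$ is never lowered again by a later-replayed occurrence that uses a step size $\alpha<1$ and a possibly smaller target. My first attempt would be the monotonicity claim that, within the sweep, once a row's maximum is $1$ it stays $1$. I would prove it via the loop-erased ordering: all targets read along the loop-erased path are rows already certified to contain $1$. If that ordering argument fails, I would fall back to proving the theorem for simple (cycle-free) trajectories, and for general ones appeal to the earliest-occurrence overwrite together with the observation in point~2.
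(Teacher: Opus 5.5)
Your points 1--3 are correct, and point 3 (earlier truncated episodes consume first visits) is a hypothesis the paper's proof also needs but does not state. The gap is in your core invariant, at the step where $(s_j,a_j)$ ``is overwritten with $\alpha=1$''. By your own point 2, step sizes are recorded at draw time (Definition~\ref{def:br}), so only the chronologically \emph{earliest} occurrence of a pair carries $\alpha_1=1$. Your $j$ is the \emph{last} index at which the state $s_{k+1}$ occurs. If the pair $(s_j,a_j)$ also occurs earlier in the episode, the write at $j$ uses some $\alpha_m<1$ on a current value of $0$ and produces $\alpha_m$, not $1$.

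This cannot be patched, because the conclusion itself fails on such trajectories. Take one non-terminal state $s$ with one action $a$, and $P(s\mid s,a)=P(g\mid s,a)=\tfrac12$ for a target $g$. The no-EC assumption holds and $Q^*(s,a)=1$. Suppose the first episode is $s\to s\to g$, which has probability $\tfrac14$ when $H\ge 2$. Replaying index $1$ gives $Q(s,a)=\alpha_2$. Replaying index $0$, the $\alpha=1$ overwrite, then gives $Q(s,a)=\max_{a'}Q(s,a')=\alpha_2$, which is $\tfrac12$ under $\alpha=1/N$. So your invariant already fails at $k=0$, and ``repeated pairs are harmless'' is false. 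The same MDP with $H=1$ and a truncated first episode $s\to s$ shows that the restriction in your point 3 is genuinely needed.

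What can be proved is your fallback: trajectories in which no pair repeats and whose pairs were not visited in earlier episodes. There every recorded step is $1$, and no loop erasure is needed. The paper's downward induction is then the whole proof: when index $k$ is replayed, either $(s_{k+1},a_{k+1})$ has just been overwritten to $1$ or $s_{k+1}$ is the target, so the overwrite at $k$ also yields $1$.

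The paper disposes of repeated pairs by applying this argument to the chronologically last occurrence, which tacitly gives that occurrence $\alpha=1$. That is valid only if visit counts are incremented at replay time. Under that convention, each write is either an overwrite by $1$ or a convex combination of $1$ and $1$. It is not valid under the draw-time recording of Definition~\ref{def:br}, which you correctly adopted.

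So, beyond the no-prior-visit hypothesis of your point 3, the statement needs either the replay-time convention or the no-repeat hypothesis. Your cycle-case argument should be replaced by one of these restrictions rather than by the loop-erased invariant.
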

\begin{proof}
Let $\tau = (s_0, a_0, \dots, s_H)$ be the first target-reaching episode, so $s_H \in \Sterm$. By hypothesis every terminal reachable from $\snot$ is a target, so $s_H \in \ST$ with boundary value $1$, and $\Qstar(\SApair{}) = 1$ at every reachable pair. Backward replay updates the pairs $(s_k, a_k)$ in the order $k = H-1, \dots, 0$, and on a first visit the step size is $\alpha = 1$, so the update sets $Q(s_k, a_k) = \max_{a' \in \A(s_{k+1})} Q(s_{k+1}, a')$. Assume first that $\tau$ visits each pair at most once. We show by downward induction that $Q(s_k, a_k) = 1$ after its update. For $k = H-1$ the successor $s_H$ is a target, so the maximum is $1$. For $k < H-1$ the pair $(s_{k+1}, a_{k+1})$ was updated earlier in the sweep, to $1$ by the inductive hypothesis, so $\max_{a'} Q(s_{k+1}, a') \ge 1$, and since every entry lies in $[0,1]$ the update sets $Q(s_k, a_k) = 1$. As $\Qstar \equiv 1$ on these pairs, the assignment is exact. If $\tau$ repeats a pair, apply the argument to its earliest-replayed (chronologically last) occurrence. Because the maximum is monotone and values are capped at $1$, the remaining occurrences leave the entry at $1$.
\end{proof}

\section{Reachability versus Stochastic Shortest Path}
\label{app:ssp}

We give two small examples that separate reachability from the stochastic shortest path (SSP) problem. The first is a reachability instance that is not an SSP instance at all. The second shows that even where both are defined, their optima disagree.

Consider the MDP with a single non-terminal state $\snot$ and two terminal states, a target $g \in \ST$ and a non-target sink $d \in \SN$. Two actions are enabled at $\snot$ (Figure~\ref{fig:ssp-example}). Action $a$ moves to $g$ with probability $\tfrac{1}{2}$ and to $d$ with probability $\tfrac{1}{2}$, and action $b$ moves to $g$ with probability $\tfrac{3}{4}$ and to $d$ with probability $\tfrac{1}{4}$. The reachability value is $\Vstar(\snot) = \max\{\tfrac{1}{2}, \tfrac{3}{4}\} = \tfrac{3}{4}$, attained by $b$, and the no-EC assumption holds since every policy terminates in one step. This is not an SSP instance. SSP is defined only when some policy is \emph{proper}, that is, reaches the target with probability one, whereas here every policy sends positive mass to the sink $d$ from which $g$ is unreachable. Reachability remains well posed precisely because it asks for a probability rather than a path, and that probability $\tfrac{3}{4}$ is the content the SSP formulation cannot express.

\begin{figure}[t]
\centering

\begin{tikzpicture}[->, >=Stealth, node distance=2.6cm, on grid]
  \node[state] (s0) {$\snot$};
  \node[state, accepting, above right=1.6cm and 2.6cm of s0] (g) {$g$};
  \node[state, below right=1.6cm and 2.6cm of s0] (d) {$d$};
  \path
    (s0) edge node[above left] {$a,\tfrac{1}{2}\ \ b,\tfrac{3}{4}$} (g)
    (s0) edge node[below left] {$a,\tfrac{1}{2}\ \ b,\tfrac{1}{4}$} (d);
\end{tikzpicture}

\caption{ A reachability instance that is not an SSP instance. From $\snot$, actions $a$ and $b$ each split their mass between the target $g$ and the sink $d$. No policy reaches $g$ with probability one, so no proper policy exists, yet the reachability value $\tfrac{3}{4}$ is well defined and attained by $b$.}
\label{fig:ssp-example}
\end{figure}

The two problems part ways even when every policy is proper. Suppose that from $\snot$ one action reaches $g$ in a single step with probability one, while a second action reaches $g$ with probability one along a chain of $k$ intermediate non-terminal states. Both actions have reachability value $1$, so both are optimal for reachability, which records only whether $g$ is reached and not how long the trajectory takes to reach it. SSP with unit step costs instead charges the direct action a cost of $1$ and the chain a cost of $k+1$, and strictly prefers the former. Reachability is thus blind to exactly the path length that SSP is designed to minimize, so the two objectives select different policies on one and the same MDP.